\DeclareMathAlphabet{\mathpzc}{OT1}{pzc}{m}{it}
\newcommand{\methodnamefull}{All-to-All Flow-based Transfer Model}
\newcommand{\methodabb}{A2A-FM}
\newcommand{\FMabb}{FM}
\newcommand{\RR}{\mathbb{R}}
\newcommand{\EE}{\mathbb{E}}
\newcommand{\X}{\mathcal{X}}
\newcommand{\Z}{\mathcal{Z}}
\newcommand{\C}{\mathcal{C}}
\newcommand{\Prob}{\mathcal{P}}
\newcommand{\A}{\mathcal{A}}
\newcommand{\B}{\mathcal{B}}
\newcommand{\cX}{\mathcal{X}}
\newcommand{\cC}{\mathcal{C}}
\newcommand{\cP}{\mathcal{P}}
\newcommand{\indep}{{\bot\negthickspace\negthickspace\bot}}
\newcommand{\Uniform}{\mathbf{unif}}
\newcommand{\source}{\emptyset}
\newcommand{\prob}{P}
\newcommand{\timedom}{[0,1]}
\newcommand{\nonregressdata}{grouped data}
\newcommand{\joint}{P}
\newcommand{\Done}{\mathcal{D}_N^1}
\newcommand{\Dtwo}{\mathcal{D}_N^2}
\theoremstyle{plain}
\newtheorem{theorem}{Theorem}[section]
\newtheorem{proposition}[theorem]{Proposition}
\theoremstyle{definition}
\theoremstyle{remark}
\title{Pairwise Optimal Transports for Training All-to-All Flow-Based Condition Transfer Model}
\author{
  Kotaro Ikeda \\
  The University of Tokyo\\
  Preferred Networks, Inc.\thanks{This work was partially done while Kotaro was an intern in Preferred Networks.}\\
  \texttt{kotaro-ikeda@g.ecc.u-tokyo.ac.jp} \\
\And
Masanori Koyama \\
The University of Tokyo \\
Preferred Networks, Inc.\\
\AND
Jinzhe Zhang \\
Preferred Networks, Inc.\\
\And
Kohei Hayashi \\
The University of Tokyo \\
Preferred Networks, Inc.\\
\And
Kenji Fukumizu \\
The Institute of Statistical Mathematics  \\
Preferred Networks, Inc.
}
\begin{document}

\maketitle

\begin{abstract}
In this paper, we propose a flow-based method for learning all-to-all transfer maps among conditional distributions that approximates pairwise optimal transport. The proposed method addresses the challenge of handling the case of continuous conditions, which often involve a large set of conditions with sparse empirical observations per condition.  We introduce a novel cost function that enables simultaneous learning of optimal transports for all pairs of conditional distributions. Our method is supported by a theoretical guarantee that, in the limit, it converges to the pairwise optimal transports among infinite pairs of conditional distributions. The learned transport maps are subsequently used to couple data points in conditional flow matching. We demonstrate the effectiveness of this method on synthetic and benchmark datasets, as well as on chemical datasets in which continuous physical properties are defined as conditions.
The code for this project can be found at \url{https://github.com/kotatumuri-room/A2A-FM}
\end{abstract}

\section{Introduction}
\label{submission}

Recent advances in generative modeling have been largely driven by the theory of dynamical systems, which describes the transport of one probability measure to another. Methods such as diffusion models \citep{song2020denoising, ho2020denoising} and flow matching (FM, \cite{lipman2023flow, liu2022flow, albergo2023stochastic}) achieve this transport by leveraging stochastic or ordinary differential equations to map an uninformative source distribution to target distribution(s).  Incorporating conditional distributions is a critical aspect of these dynamical generative models, as this enables the generation of outputs with specific desired attributes. Considerable research has been dedicated to designing mechanisms to condition these models. For example, in image and video generation, text prompts are commonly used to guide the output toward specified content \citep{Dhariwal2021-ps, ho2022classifier}. Similarly, in molecular design, conditioning on physical properties enables the generation of molecules with target characteristics \citep{Xu2022-yg,Kaufman2024}.

This paper focuses on {\em condition transfer} (Fig.~\ref{fig:regressivedata} (a)), an essential task in conditional generative modeling whose goal is to transport an arbitrary conditional distribution to another. 
In applications, it is often used to modify the specific attributes or conditions of a given instance while preserving its other features. Applications span various domains; for example, 
in computer vision, image style transfer has been an active area of research \citep{Gatys2016-af,Zhang2023-ls}, whereas chemistry, modifying molecules to achieve desired physical properties is vital for exploring new materials and drugs \citep{Jin2019-gh,Kaufman2024}.

Flow-based generative models have been explored for condition transfer. Among others, \citet{albergo2023stochastic} proposed Stochastic Interpolants (SI), a flow-based method that interpolates between two distributions.
\citet{tong2023improving} introduced OT-CFM, which uses optimal transport~\citep{Villani2009-bm} to couple the data points in minibatches for conditional FM, and applied it to tasks such as image style transfer and single-cell expression data. 
Meanwhile, recent methods, such as Multimarginal SI \citep{albergo2023multimarginal} and Extended Flow Matching (EFM, \cite{isobe2024extended})
have extended FM approaches to support all-to-all(multiway) transports between multiple conditional distributions.

A challenge in condition transfer arises particularly when dealing with {\em continuous} condition variables. This scenario encompasses critical scientific applications, such as modifying molecules based on continuous physical properties. Such continuous conditions can pose the difficulty that each condition $c$ may be associated with only a single observation $x$, resulting in limited information for each conditional distribution. Moreover, the situation may involve an infinite number of source-target condition pairs; applying a method designed for two distributions to all the pairs would be computationally infeasible.
Most existing approaches cannot resolve these difficulties because they assume what we call \textit{\nonregressdata}, a type of dataset in which each observed condition is associated with a sufficiently large number of samples (Fig.~\ref{fig:regressivedata}~(b)). 
To the best of our knowledge, no method that can learn the \textit{all-to-all} transfer model for condition transfer scalably on general datasets of observation-condition pairs, including non-grouped ones.

This paper proposes \methodnamefull{} (\methodabb{}), a novel method that solves condition transfer tasks on general datasets (even in non-\nonregressdata~setting) by simultaneously learning a \textit{family of flows} that approximates the optimal transport~(OT) between any pair of conditional distributions. Inspired by the technique of \citep{chemseddine2025conditional, kerrigan2023functional}, we develop a novel cost function with a theoretical guarantee that
empirical couplings converge to the pairwise OT in the infinite sample limit.

The main contributions of this paper are as follows.
\begin{itemize}[topsep=-4pt, partopsep=0pt, itemsep=3pt, parsep=0pt, leftmargin=9pt]
\item \methodabb{}, an FM-based method, is proposed to learn pairwise optimal transport maps across all conditional distributions from general datasets, including both \nonregressdata\ and non-\nonregressdata, irrespective of whether the conditional variables are continuous.
\item We introduce a novel cost function for coupling samples and prove that the coupling achieves the pairwise optimal transports in the infinite sample limit.
\item \methodabb{} is applied to a chemical problem of altering the target attributes of a molecule while preserving the other structure, and it demonstrates competitive and efficient performance.
\end{itemize}

\begin{figure}
\centering
\includegraphics[width = \linewidth]{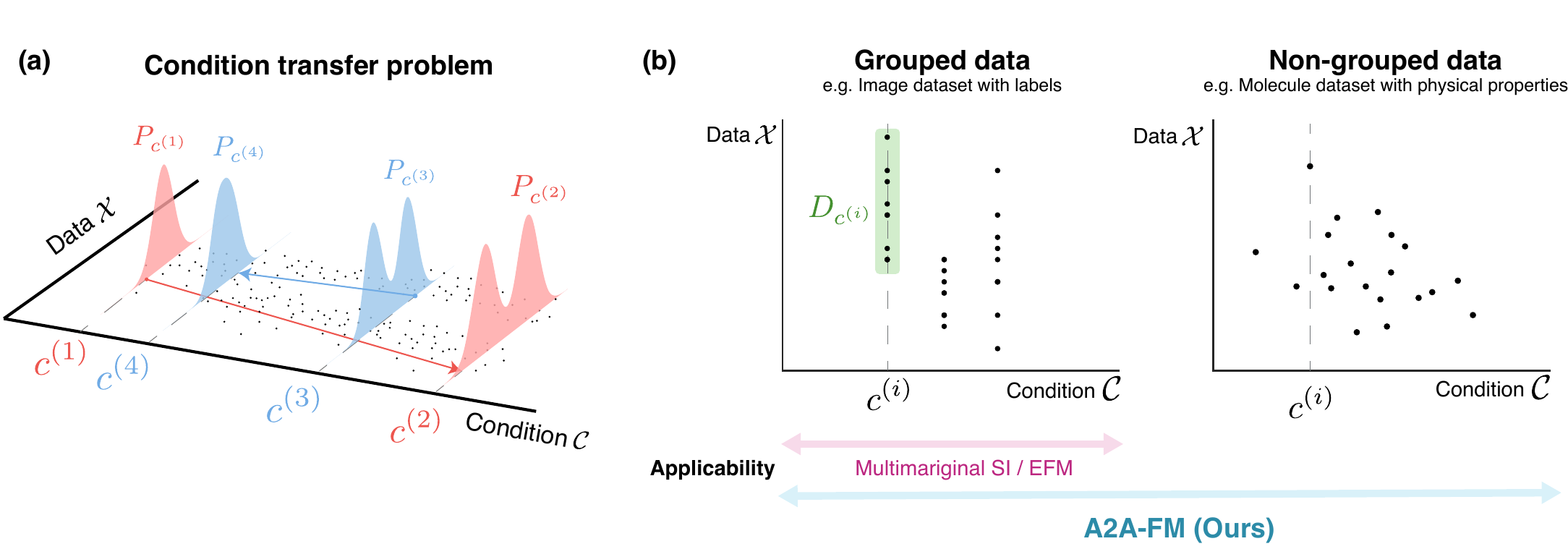}
\caption{
(a) The task is to transport $x_{\rm src} \sim P_{c_{\rm src}}$ to generate  $x_{\rm targ} \sim P_{c_{\rm targ}}$ for arbitrary ($c_{\rm src},c_{\rm targ}$) pair, where $P_c$ denotes the conditional distribution. Red and blue arrows respectively represent the case of  
$(c_{\rm src}, c_{\rm targ}) = (c^{(1)}, c^{(2)})$ and $(c_{\rm src}, c_{\rm targ}) = (c^{(3)}, c^{(4)})$. (b) Left: Grouped data is the type of dataset that can be grouped into subsets $D_{c^{(i)}}$ of large size, whose members are i.i.d. samples from $P_{c^{(i)}}$. Many condition transfer methods including Multimariginal Stochastic Interpolants (SI)~\cite{albergo2023multimarginal} and Extended Flow Matching (EFM)~\cite{isobe2024extended} leverage this data format. Right: In non-grouped data, a sample corresponding to a given condition can be unique. Proposed method, \methodabb{}, can learn condition transfer on both cases in the form of pairwise optimal transport. (see Section~\ref{sec:related} for comparision with related works).
} 
\label{fig:regressivedata}
\end{figure}

\section{Preliminaries}

Suppose that we have a joint distribution $P$ on the space $\X\times \C$, where $\X\subset\RR^{d_x}$ and $\C\subset \RR^{d_c}$ denote the data space and the condition space, respectively. Let $\{\prob_c | c \in \mathcal{C} \}$ denote the family of conditional distributions parameterized by $c$.  
Our method belongs to the family of Flow Matching (\FMabb{}) frameworks~\citep{lipman2023flow,liu2022flow,albergo2023stochastic}. Therefore, we first review FM and conditional optimal transport in the context of conditional generation, before turning to our condition-transfer task.
Throughout the paper, we use the notation $x \sim Q$ to denote that $x$ follows the distribution $Q$, and $\dot{x}(t)$ to denote the time derivative of a smooth curve $x(t)$. The set of integers $\{1,\ldots,N\}$ is denoted by $[1:N]$. 

\subsection{Flow Matching (FM)}

\FMabb{} was originally developed to learn a transport from a fixed, possibly uninformative source distribution, such as the normal distribution, to a target distribution, for which samples $\{x^{(j)}\}$ are available for training.  Below, we review a standard FM method, but adapted to learn the conditional distributions.  The source distribution $\prob_\source$ is transported to the target $\prob_c$ via an ODE
\begin{equation}
    \dot{x}_{c}(t)  = v(x_{c}(t),  t | c ) , ~~~ x_c(0) \sim \prob_\source, \label{eq:ODE}
\end{equation}
so that $\Phi_{c, 1}(x(0))$ follows $\prob_{c}$, where  $\Phi_{c, t}(x(0)) := x_c(t)$ is the flow defined by the ODE.
\FMabb{} learns the vector field $v(x_{c}(t),  t \mid c)$ via neural networks with the parameter set $\theta$, using the loss function 
\begin{align}
L(\theta) =  \EE_{\psi_c, c, t} [\|v_\theta(\psi_c(t),  t | c) -  \dot \psi_c(t) \|^2], \label{eq:FM}
\end{align}
where $\psi_c: \timedom \to \mathcal{X}$ is a random path such that  $\psi_c(0)$ and $\psi_c(1)$ follow $\prob_{\source}$ and $\prob_{c}$, respectively. 

\subsection{Optimal Transport (OT) and OT-based FM methods}
A popular method for constructing the random paths $\psi_c(t)$ is minibatch optimal transport (OT-CFM, \citep{tong2023improving,pmlr-v202-pooladian23a}), which aims to approximate the ODE of optimal transport.

Specifically, OT-CFM uses a linear path $\psi_c(t) = t \psi_c(1) + (1-t) \psi_c(0)$. The key to this approach lies in how the start point $\psi_c(0) \sim \prob_\source$ and the end point $\psi_c(1) \sim \prob_c$ are coupled. OT-CFM utilizes a coupling based on the optimal transport map between these two distributions. The theoretical basis for this is the optimal transport problem. Given two distributions $Q_1, Q_2$ on $\X$, the Kantorovich formulation of the OT problem is as follows:
\begin{align}
    \inf_{\Pi\in\Gamma(Q_1,Q_2)}\int_{\X^2} \|x_1-x_2\|^2 d\Pi(x_1,x_2),\label{eq:wasserstein}
\end{align}
where $\Gamma(Q_1,Q_2)$ is the set of all joint distributions of $(x_1,x_2)$ whose marginals are $Q_1$ and $Q_2$, respectively. The square root of the infimum is known as the 2-Wasserstein distance, $W_2(Q_1, Q_2)$~\cite{villani}.

When $Q_1$ satisfies certain conditions (e.g., is absolutely continuous), this problem can be reduced to the Monge formulation, which seeks a transport map $T: \X \to \X$~\cite{villani}:
\begin{align}
    \inf_{T: \X \to \X}\int_{\X} \|x_1-T(x_1)\|^2 dQ_1(x_1),\label{eq:monge}
\end{align}
where the infimum is taken over all maps $T$ that satisfy the push-forward condition $Q_2 = T\# Q_1$.

Denoting the optimal transport map that solves Eq.~\eqref{eq:monge} for $Q_1 = P_{\source}$ and $Q_2 = P_c$ as $T_{\source \to c}$, the ideal random path for OT-CFM is defined by the deterministic mapping $\psi_c(1) = T_{\source \to c}(\psi_c(0))$, which gives
\begin{align}
    \psi_c(t) = t \:T_{\source \to c}(\psi_c(0)) + (1-t)\: \psi_c(0).
\end{align}

In practice, however, this map $T_{\source \to c}$ is unknown. Therefore, OT-CFM approximates this optimal coupling at the minibatch level. The sampling process consists of two steps: (i) drawing independent batches $(\{x^{(i)}_\source\}, \{x^{(j)}_c\})$ from $\prob_\source$ and $\prob_c$, respectively, and (ii) using a finite-sample OT algorithm, such as~\citep{sinkhorn}, to obtain a coupled pair $(x_\source, x_c)$ between these batches. This pair $(\psi_c(0), \psi_c(1))=(x_\source, x_c)$ is then used to construct the linear path $\psi_c(t)$.

\subsection{Conditional Optimal Transport (COT) for conditional generation }
Some recent work \citep{chemseddine2025conditional,kerrigan2024dynamic,Alfonso2023-zo, hosseini2025conditional} proposed an algorithm and theory of conditional optimal transport(COT) that minimizes \textit{conditional Wasserstein distance}. Using a dataset $\{(x^{(j)}, c^{(j)})\}$, their methods learn a transport from the conditional source $\prob_{\source, c}$ to the conditional target distributions $\prob_c$ on the joint space $\X\times\C$. When using empirical OT in such a situation, a key challenge is that the condition-wise optimal coupling from conditional source to conditional target does not necessarily converge to the minimizer of conditional Wasserstein distance (see Example 9, \cite{chemseddine2025conditional}).
To solve this problem, several works proposed a coupling algorithm on $\X\times \C$ that provably converges to the optimal transport for each $c$~\citep{carlier2010knothe, hosseini2025conditional,chemseddine2025conditional}. Their objective is to minimize the following cost, where $\pi$ ranges over $\mathfrak{S}_N$, the symmetric group of $[1:N]$.
\begin{align}
\begin{split}
\sum_{i=1}^N 
\|x^{(i)}_1-x^{\pi(i)}_\source\|^2 + \beta \|c_1^{(i)}-c_\source^{\pi(i)}\|^2 \label{eq:cost_COT}.
\end{split}
\end{align}
 Here, the two batches $B_\source=\{(x_\source^{(i)}, c_\source^{(i)})\}_{i=1}^N$ and $B_1=\{(x_1^{(j)}, c_1^{(j)})\}_{j=1}^N$ are drawn from the source-condition joint distribution $\prob_{\source}$ 
and data distribution $\prob$, respectively.
\citet{chemseddine2025conditional} have shown that, if we regard the optimal coupling $\pi_\beta^*$ as 
the joint distribution $\Pi^*_\beta := \sum_i \delta_{(x_{1}^{(i)}, c_{1}^{(i)})\times (x_{\source}^{\pi_\beta^*(i)}, c_{\source}^{\pi_\beta^*(i)})}$ on $(\X \times \C) \times (\X \times \C)$, then with increasing $\beta$ and sample size $N$ (up to subsequence), the distribution $\Pi_\beta^*$ converges to a joint distribution supported on $\{(x,c,x',c)\in (\X\times\C)\times(\X\times\C)\}$ that achieves the {\em conditional 2-Wasserstein distance}
\begin{align}
\EE_{c}[ W_2^2 (\prob_{\source, c}, \prob_c)], \label{eq:CondWasser}
\end{align}
where $\prob_{\source, c}$ and $\prob_c$ denote the conditional distributions given $c$ for the source $\prob_{\source}$ and target $\prob$, respectively.  
Because \eqref{eq:CondWasser} is the average 2-Wasserstein distance of the conditional distributions given $c$, the above result implies that the algorithm with the cost \eqref{eq:cost_COT} constructs a desired coupling that realizes the optimal transport per condition for a large sample limit.

\section{Purpose and Method}\label{sec:purpose-method}
In this section, we first restate the goal of \methodabb{} and describe its applicability to general dataset. We then introduce the training objective and provide theoretical support for the algorithm.

\textbf{Pairwise optimal transports}:  
The task of condition transfer for the family of conditional distributions $\{P_c | c \in \C\}$ can be cast as the problem of learning the $(c_1, c_2)$-parameterized transports $T_{c_1 \to c_2}$ from $P_{c_1}$ to $P_{c_2}$. 
\methodabb{} is a method to learn \textit{pairwise optimal transports} $\{T_{c_1 \to c_2}\mid c_1,c_2\in \mathcal{C}\}$ that minimize the transport cost. 
More formally, writing the transport $T_{c_1\to c_2}$ by its induced joint distribution $P_T(\cdot|c_1,c_2)$ on $\X\times\X$ with marginals $P_{c_1}$ and $P_{c_2}$(Kantorovich formulation), \methodabb{}'s goal is to learn the maps that simultaneously minimize the \textit{pairwise transport cost} 
\begin{align}
 \int_{\mathcal{X}^2}  \|x_1- x_2\|^2  d\prob_T(x_1, x_2| c_1, c_2) ~~~~ \textrm{for all $(c_1, c_2).$}  \label{eq:A2AoptP}
\end{align}
Pairwise transport cost \eqref{eq:A2AoptP} is the transport cost between the conditional distributions $P_{c_1}$ and $P_{c_2}$.

\textbf{Applicability to general dataset}: 
Most importantly, the scope of \methodabb{} covers any type of dataset $D = \{(x^{(i)}, c^{(i)})\}$ drawn i.i.d from the joint distribution $\joint$, regardless of whether the data is grouped or non-grouped.
While some recent methods~\citep{isobe2024extended, albergo2023stochastic, atanackovic2024metaflowmatchingintegrating} also aim to learn all-to-all condition transports, they require the more restrictive  \textit{\nonregressdata{}} to obtain the information about $P_c$ and to compute its coupling to other distributions in the dataset.
The significance of \methodabb{} lies in the capability of learning the pairwise OT also from {\em non-grouped} data (Fig.~\ref{fig:regressivedata}~(b)), which arises naturally with continuously valued conditions in many scientific applications.

\subsection{Training objectives and Procedure} \label{sec:objective}
\begin{algorithm}[t]
        \caption{Training of \methodabb}
        \label{alg:method}
\begin{algorithmic}[1]
\renewcommand{\algorithmicrequire}{\textbf{Input:}}
 \REQUIRE 
(i) Dataset of sample-condition pairs $D := \{(x^{(i)}, c^{(i)})\}$, where each $x^{(i)} \in \X$ is sampled from $\prob_{c^{(i)}}$. 
(ii) A parametric model of a vector field $v_\theta: \X \times [0, 1] \times \C \times \C \to \X$.
(iii) The scalar parameter $\beta$. 
(iv) An algorithm OPTC for optimal coupling.  
 \renewcommand{\algorithmicensure}{\textbf{Return:}}
 \ENSURE The parameter $\theta$ of $v_\theta$ 
  \FOR {each iteration}
   \item[] \texttt{\color{teal}\# Step 1: Compute the coupling}
  \STATE Subsample
    batches $B_1=\{(x^{(i)}_1, c^{(i)}_1)\}_{i=1}^N, B_2=\{(x^{(i)}_2, c^{(i)}_2)\}_{i=1}^N$ from $D$. 
  \STATE Minimize \eqref{eq:costA2A}  about $\pi_\beta^*$ over $N$ indices by OPTC
 \item[] \texttt{\color{teal}\# Step 2: Update $v_\theta$} 
   \STATE Sample $t_i \sim\Uniform{\timedom}$, $i \in [1:N]$.
     \STATE Update $\theta$ by $ \nabla_\theta L(\theta)$ with  \\
     ${\textstyle L(\theta) = \sum_{i=1}^N \| v_\theta (\psi_i(t_i), t_i | c_1^{(i)}, c_2^{\pi_\beta^*(i)}) -  \dot{\psi}_i(t_i) \|^2}$
     where 
     $\psi_i(t) = (1-t) x_1^{(i)} + t x_2^{\pi_\beta^*(i)}$. 
  \ENDFOR
\end{algorithmic}
\end{algorithm}

Aligned with the OT-CFM \cite{tong2023improving}, 
we propose a flow-based method through minibatch OT to achieve pairwise optimal transport.
To learn $T_{c_1 \to c_2}$, we use  $(c_1, c_2)$-parametrized ODE like \eqref{eq:ODE}:
\begin{align}
\begin{split}
    \dot{x}_{c_1, c_2}(t)  = v(x_{c_1, c_2}(t),  t | c_1, c_2), \: \text{~~where~~} \:x_{c_1, c_2}(0) \sim \prob_{c_1}, ~ x_{c_1, c_2}(1) \sim \prob_{c_2}. \label{eq:A2AODE}
\end{split}
\end{align}
The model for the vector field 
$v: \X \times \timedom \times \C \times \C \to \RR^{d_x}$ is trained with random path $\psi_{c_1, c_2}$, whose endpoints are drawn from the coupled points in two minibatches, as explained below. The training objective for $v$ is given by
\begin{equation}
\EE_{\psi_{c_1, c_2}, c_1, c_2,t} \left[\|v(\psi_{c_1, c_2}(t),  t | c_1, c_2) -  \dot \psi_{c_1, c_2}(t)\|^2\right].  \label{eq:A2AFM}
\end{equation}

The crux of \methodabb{} is how we construct the coupling that defines $\psi_{c_1,c_2}$.  We begin by sampling two independently chosen batches $B_1=\{(x^{(i)}_1, c^{(i)}_1)\}_{i=1}^N, B_2=\{(x^{(i)}_2, c^{(i)}_2)\}_{i=1}^N$ of the same size $N$ from the dataset $D$. The objective function of the coupling is 
\begin{equation}
    \sum_{i=1}^N  \|x^{(i)}_1-x^{\pi(i)}_2\|^2
 + \beta \left(\|c_1^{(i)}-c_1^{\pi(i)}\|^2 +   \|c_2^{(i)}-c_2^{\pi(i)}\|^2\right),
 \label{eq:costA2A}
\end{equation}
where 
$\pi$ runs over $\mathfrak{S}_N$.
Letting $\pi_\beta^*$ be the minimizer of \eqref{eq:costA2A}, we define the path $\psi_i=\psi_{c_1, c_2}$ with $(c_1,c_2)=(c_1^{(i)},c_2^{\pi^*_\beta(i)})$ for $i\in[1:N]$ by 
\[
\psi_i(t) = 
(1-t) x_1^{(i)}+ t x_2^{\pi_\beta^*(i)}.
\]

To transport a sample $x_{c_1} \sim \prob_{c_1}$ to a sample in $\prob_{c_2}$ using the trained $v$, we follow the same procedure as in standard FM and solve the ODE \eqref{eq:A2AODE} forward from $t=0$ to $t=1$ with initial condition $x_{c_1,c_2}(0) = x_{c_1}$. 
The algorithmic training procedure is summarized in Algorithm \ref{alg:method}.

\subsection{Intuition behind the objective}\label{sec:intuition}
\begin{figure} 
\begin{center}
\includegraphics[width=\linewidth]{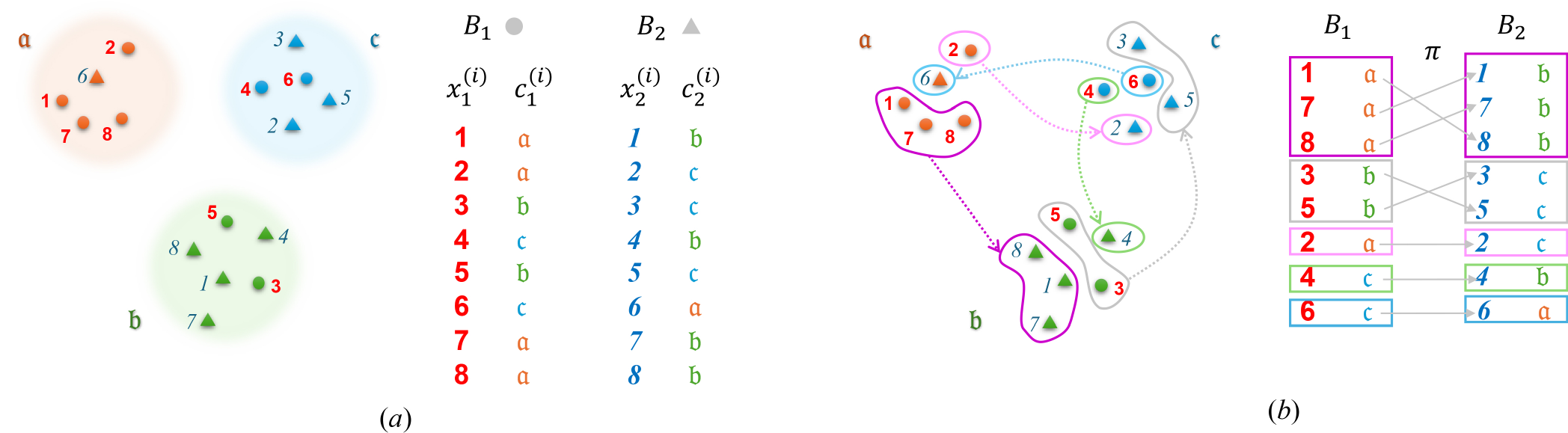}
\end{center}
\caption{(a) Batches $B_1, B_2$ drawn independently from $\joint$ on $\X \times \C$,  where $\C=\{\mathpzc{a}, \mathpzc{b}, \mathpzc{c}\}$. (b) Couplings between $B_1$ and $B_2$ by \eqref{eq:costA2A}. With large $\beta$, the cost favors $\pi$ such that $(c^{(i)}_1, c^{(i)}_2)  = (c^{\pi(i)}_1, c^{\pi(i)}_2) $.} 
\label{fig:coupling}
\end{figure}

The objective function \eqref{eq:costA2A} controls the continuity of $P_c$ with respect to $c$ in the transfer maps through a balance of the hyperparameter $\beta$ and the sample size $|D|=N$. 
When there is only one sample $x_c$ per each $c$, it is impossible to learn $P_c$ separately, let alone the optimal transport between $P_{c_1}$ and $P_{c_2}$. However, if $\beta$ is small enough, data points $x_{c_1}\sim P_{c_1}$ and $x_{c_2}\sim P_{c_2}$ can be coupled for almost any $c_1$ and $c_2$, so that the information between $P_{c_1}$ and $P_{c_2}$ is shared. Conversely, as $\beta\to\infty$, \eqref{eq:costA2A} would force $c_k^{(i)}$ to be close to $c_k^{\pi(i)}$, 
requiring that for each pair of $(c_1, c_2)$, the sample $x_1^{(j)}$ with condition $c_1^{(j)}$ near $c_1$ is coupled with $x_2^{(j)}$ with condition $c_2^{(j)}$ near $c_2$. This would allow the samples with similar $c$ values to be transported collectively.

We illustrate our way of coupling when $\C$ is finite (Fig.~\ref{fig:coupling}), where the functionality of \eqref{eq:costA2A} is more intuitive. 
In that case, for $\beta\to\infty$, the second term in \eqref{eq:costA2A} would diverge to infinity unless it is zero.
However, for $|\C|<\infty$, 
as the sample size $N$ increases, there will be many nontrivial permutations $\pi$ with $c_k^{(i)}= c_k^{\pi(i)}$ ($k=1,2$), which makes the second term zero. 
Therefore, the minimum of \eqref{eq:costA2A} is attained by a permutation in $\mathfrak{S}^o:=\{\pi\in\mathfrak{S}_N \mid c_k^{(i)}= c_k^{\pi(i)}~\text{ for all }i\in[1:N], k=1,2\}$. 
We can see the function of $\mathfrak{S}^o$ more clearly by
partitioning $[1:N]$ into the subsets closed within $\C$; 
$[1:N] = {\textstyle \biguplus_{(c_1, c_2)\in\C^2} } U_{(c_1, c_2)}$, 
where $U_{(c_1, c_2)}:=\{i\in[1:N]\mid c_k^{(i)} = c_k, (k=1,2)\}$.  
For example, in Fig.~\ref{fig:coupling}, the subset $\mathfrak{S}^o$ consists of permutations that act separately on  $\{1,7,8\}$, $\{3,5\}$,$\{2\}$,$\{4\}$, and $\{6\}$.
Consequently, a permutation in $\mathfrak{S}^o$ is decomposed into permutations within $U_{(c_1,c_2)}$, and $\mathfrak{S}^o$ is the the product of symmetric groups of $U_{(c_1,c_2)}$. The minimization \eqref{eq:costA2A} is reduced to the sum of 
$${\textstyle \sum_{i\in U_{(c_1,c_2)}}} 
\|x_1^{(i)}-x_2^{\pi(i)}\|^2,$$
where each summand is the standard cost of transporting data from class $c_1$ in $B_1$ to class $c_2$ in $B_2$.   
Therefore, the optimal $\pi^*_\beta$ achieves the optimal coupling among $\{ x_1^{(i)} \mid i \in U_{(c_1, c_2)}\} \to \{ x_2^{\pi(i)} \mid i \in U_{(c_1, c_2)}\}$.  
In Fig.~\ref{fig:coupling}, we see $U_{(\mathfrak{a}, \mathfrak{b})} = \{1, 7, 8\}$, and the optimal $\pi^*_\beta$ would optimally transport $\{x^{(1)}_1, x^{(7)}_1,x^{(8)}_1\}$ in Class $\mathfrak{a}$ to $\{x^{(1)}_2, x^{(7)}_2,x^{(8)}_2\}$ in Class $\mathfrak{b}$.  As $N$ increases, each $U_{(c_1,c_2)}$ contains a larger number of indices, so that $\pi_\beta^*$ better approximates the OT from $\prob_{c_1}$ to $\prob_{c_2}$ for every $(c_1,c_2)$. 
Interestingly, although we have discussed the case of finite $\C$ in this section,  this argument extends to the general case of continuous $\C$. We elaborate on this fact in the next subsection.

\subsection{Theoretical guarantee} \label{sec:theory}
We show that the coupling given by \eqref{eq:costA2A} 
converges to pairwise OT. We state the result informally here and defer the formal result to Appendix \ref{sec:theory-appendix}. 
\begin{proposition}[Informal] \label{prop:convergence}
Let $\Pi^*_{\beta}$ be the joint distribution on $(\X \times \C) \times (\X \times \C)$ defined by the coupling $\pi_\beta^*$ that minimizes \eqref{eq:costA2A}, that is 
\[
\Pi^*_{\beta}={\textstyle \sum_{i=1}^N }\delta_{((x^{(i)}_1, c^{(i)}_1), (x^{\pi^*_\beta(i)}_2, c^{\pi^*_\beta(i)}_2))}.
\] 
Then, for any sequence $\beta_k \to \infty$, there exists an increasing sequence of the sample size $N_k$ such that $\Pi^*_{\beta_k}$ converges to $\Pi^*$ for which  $\Pi^*(\cdot,\cdot \mid c_1,c_2)$, the corresponding conditional distribution of $(x_1,x_2)$ given $(c_1,c_2)$, is a joint distribution on $\X\times\X$ that achieves below for almost every $(c_1, c_2):$
 \begin{align}
\int_{\X^2} \|x_1-x_2\|^2 d\Pi^*(x_1,x_2|c_1,c_2) = W_2^2(P_{c_1},P_{c_2}). 
\label{eq:av_W2}
\end{align}  
\label{thm:pairwise}
\end{proposition}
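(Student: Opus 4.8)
The plan is to read the coupling cost \eqref{eq:costA2A} as an instance of the COT cost \eqref{eq:cost_COT} living on an augmented condition space, invoke the convergence theorem of \citet{chemseddine2025conditional} essentially verbatim, and then translate its conclusion back to the pairwise picture. Concretely, I would set $\widetilde{\C}:=\C\times\C$ and, for $(x_1,c_1,x_2,c_2)\sim P\otimes P$ on $(\X\times\C)^2$, write $\widetilde c:=(c_1,c_2)$. Define $\mu:=\mathrm{Law}(x_1,\widetilde c)$ and $\nu:=\mathrm{Law}(x_2,\widetilde c)$ on $\X\times\widetilde{\C}$; both have $\widetilde{\C}$-marginal $\lambda:=P_\C\otimes P_\C$ (with $P_\C$ the $\C$-marginal of $P$), and by independence of the two $P$-draws they disintegrate as $\mu(\cdot\mid c_1,c_2)=P_{c_1}$ and $\nu(\cdot\mid c_1,c_2)=P_{c_2}$. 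A pair of batches $(B_1,B_2)$ is exactly an i.i.d.\ sample $\{(x_1^{(i)},c_1^{(i)},x_2^{(i)},c_2^{(i)})\}_{i=1}^N$ from $P\otimes P$, and putting $\widetilde c^{(i)}:=(c_1^{(i)},c_2^{(i)})$ the cost \eqref{eq:costA2A} becomes $\sum_i\|x_1^{(i)}-x_2^{\pi(i)}\|^2+\beta\|\widetilde c^{(i)}-\widetilde c^{\pi(i)}\|^2$, which is precisely \eqref{eq:cost_COT} for coupling the empirical measures $\widehat\mu_N:=\frac1N\sum_i\delta_{(x_1^{(i)},\widetilde c^{(i)})}$ and $\widehat\nu_N:=\frac1N\sum_i\delta_{(x_2^{(i)},\widetilde c^{(i)})}$, which are empirical versions of $\mu$ and $\nu$. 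Under this dictionary, $\Pi^*_\beta$ of the statement is the image of the COT coupling $\widehat\Pi^*_\beta$ under the continuous coordinate-forgetting map $F\big((x,(a,b)),(x',(a',b'))\big):=\big((x,a),(x',b')\big)$.

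With the reduction in place, I would invoke the result of \citet{chemseddine2025conditional} for the source--target pair $(\mu,\nu)$: for any $\beta_k\to\infty$ there is an increasing sequence $N_k$ so that, along a subsequence, $\widehat\Pi^*_{\beta_k}\to\widehat\Pi^*$ weakly, where $\widehat\Pi^*$ has marginals $\mu$ and $\nu$, is supported on the diagonal set $\{((x,\widetilde c),(x',\widetilde c')):\widetilde c=\widetilde c'\}$, and disintegrates over $\widetilde c$ as an optimal transport plan between $\mu(\cdot\mid\widetilde c)$ and $\nu(\cdot\mid\widetilde c)$ for $\lambda$-a.e.\ $\widetilde c$. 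In particular, for $\lambda$-a.e.\ $(c_1,c_2)$ the conditional cost of $\widehat\Pi^*$ equals $W_2^2\big(\mu(\cdot\mid c_1,c_2),\nu(\cdot\mid c_1,c_2)\big)=W_2^2(P_{c_1},P_{c_2})$, the pairwise analogue of \eqref{eq:CondWasser}.

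Next I would push this back through $F$. On the support of $\widehat\Pi^*$ one has $\widetilde c=\widetilde c'=(c_1,c_2)$, so there $F$ acts by $\big((x,(c_1,c_2)),(x',(c_1,c_2))\big)\mapsto\big((x,c_1),(x',c_2)\big)$; hence $\Pi^*:=F_\#\widehat\Pi^*$ is supported on pairs whose first component carries condition $c_1$ and whose second carries condition $c_2$. Therefore conditioning $\Pi^*$ on $(c_1,c_2)$ amounts to conditioning $\widehat\Pi^*$ on $\widetilde c=(c_1,c_2)$ (the condition coordinates becoming deterministic), so $\Pi^*(\cdot,\cdot\mid c_1,c_2)$ is an optimal coupling of $P_{c_1}$ and $P_{c_2}$ and \eqref{eq:av_W2} holds for $\lambda$-a.e.---i.e.\ almost every---$(c_1,c_2)$; weak convergence passes through the continuous $F$, giving $\Pi^*_{\beta_k}\to\Pi^*$.

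The delicate step is the invocation in the second paragraph: the cited COT theorem is stated for two \emph{independent} batches, whereas here $\widehat\mu_N$ and $\widehat\nu_N$ are built from the same draws and literally share their condition coordinates, so they are dependent. What I would have to verify is that the proof of \citet{chemseddine2025conditional} uses only the marginal convergences $\widehat\mu_N\to\mu$, $\widehat\nu_N\to\nu$ (which hold irrespective of the dependence), together with tightness of the optimal minibatch couplings and lower semicontinuity of the cost; this is indeed the case, and the shared condition marginal only makes the upper-bound (near-optimal coupling) construction easier. The remaining work is routine but must be done carefully: checking the hypotheses of the cited theorem for $(\mu,\nu)$ (Polish spaces, finite second moments of all $P_c$ so that \eqref{eq:av_W2} is finite, existence of a measurable family of optimal plans), establishing measurability of $(c_1,c_2)\mapsto\Pi^*(\cdot\mid c_1,c_2)$ and of $(c_1,c_2)\mapsto W_2^2(P_{c_1},P_{c_2})$, and organizing the joint $\beta_k,N_k\to\infty$ limit as a diagonal/subsequence argument so that the diagonal support $\{\widetilde c=\widetilde c'\}$ is genuinely attained in the limit.
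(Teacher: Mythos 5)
Your proposal is correct and follows essentially the same route as the paper's proof of Theorem~\ref{thm:convergence}: augmenting the condition space to $\C\times\C$ (your $\mu,\nu$ are exactly the paper's $Q^1=\varpi_{1,2,4}\#(P\otimes P)$ and $Q^2=\varpi_{3,2,4}\#(P\otimes P)$), identifying the cost \eqref{eq:costA2A} with the conditional-OT cost $d_\beta$ on the augmented space, invoking Prop.~12 of \citet{chemseddine2025conditional} for the subsequential $W_2$-limit, and disintegrating over $(c_1,c_2)$ via Proposition~\ref{prop:cond_W}. The dependence between the two augmented empirical measures that you flag is handled in the paper exactly as you suggest, by relying only on the marginal $W_2$-convergence $\widehat{Q}^a_N\to Q^a$.
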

We will show experimentally in Section \ref{sec:experiments} that, with the objective function \eqref{eq:costA2A}, \methodabb{} learns an approximate pairwise optimal transport even on the generic dataset drawn from joint distribution $\joint$.

\section{Related Works} \label{sec:related}

In addition to the direct application of optimal transport to transfer tasks \cite{DeepJDOT,mroueh_wasserstein_2020}, other works such as \citep{tong2023improving,liu2022flow} used flow models to transfer between two distributions with different attributes (e.g., images of smiling faces vs.~non-smiling).
\citet{Kapusniak2024-ec} also incorporated the geometry of the data manifold to the flow.
Although one can apply these methods to many pairs of conditions individually, such a strategy is computationally inefficient for the all-to-all condition transfer task.  

In this regard, \citet{albergo2023multimarginal} presented Multimarginal SI, which uses the principle of generalized geodesic~\citep{Ambrosio2008-ty}. In this framework, one produces $T_{c_1 \to c_2}$ for arbitrary $c_1, c_2$ as a linear ensemble of the optimal transports $T_{\emptyset \to c^{(k)}}$ from an arbitrary barycentric source $P_{\emptyset}$ to $P_{c^{(k)}}$.  
By design, this approach requires \nonregressdata{}~for the learning of $T_{\emptyset \to c^{(k)}}$. 
\citet{isobe2024extended} take another approach using a matrix field for the system of continuity equations, from which an arbitrary $T_{c_1 \to c_2}$ can be derived. For the learning of the matrix field, \nonregressdata{} is again required. 
In another related note, \citep{atanackovic2024metaflowmatchingintegrating} recently proposed a method that can transport \textit{any} source distribution to a target distribution by encoding the source population itself, and conditioning the vector field with the encoded population. 
This approach again requires \nonregressdata{} for the learning of population embedding.

Meanwhile,~\citet{pmlr-v238-manupriya24a} uses COT~\citep{chemseddine2025conditional,kerrigan2024dynamic} to learn the OT map from $P_c$ to $Q_c$ for each $c$ when given the jointly sampled dataset of $(x, c) \sim P$ and $(y, c) \sim Q$.  They handle non-\nonregressdata, but they cannot create arbitrary $T_{c_1 \to c_2}$. 
Although the proof technique in COT is an inspiration for our proof, COT pursues \textit{conditional generation} as opposed to the fundamentally different task of all-to-all \textit{condition transfer}.
In this regard, we emphasize that the objective \eqref{eq:costA2A} is fundamentally different from the one used in conditional OT \eqref{eq:cost_COT}. Note that, directly applying the method of similar nature as COT by a simple replacement of a single instance $c$ with $(c_1, c_2)$ would fail to learn a map between $c_1 \neq c_2$;  if we learn COT for $v(\cdot| c_1, c_2)$ via \eqref{eq:cost_COT} with $(c_1, c_2)$ in place of $(c_\emptyset, c_1)$, the model will only learn transfers between a pair of $c_1$ and $c_2$ that are close to each other.

\section{Experiments} 
\label{sec:experiments}

In this section, we present a series of experiments to validate our theoretical claims in Section \ref{sec:purpose-method}, as well as the effectiveness of our model on a real-world dataset. 
The methods we use for comparison include (1) partial diffusion~\cite{Kaufman2024}, a method that adds limited noise to a source sample and then denoises using classifer-free guidance to obtain a target-conditioned sample while preserving smilarity to the input (2) Multimarginal SI (see Section~\ref{sec:related}), and (3) an application of OT-CFM in which we use the sample dataset itself as the source $P_\emptyset$. In non-grouped settings, we generally choose $\beta = N^{1/(2d_c)}$, where $d_c$ is the dimension of $\C$ (see Appendix~\ref{sec:beta-rate-appendix} for theoritical backgrounds for this rate).

\subsection{Synthetic Data (grouped and non-grouped data)} \label{sec:exp_synthetic}
We demonstrate with synthetic data that both the coupling $\pi_\beta^*$ used to make the supervisory paths and the trained vector fields $v$ in \methodabb{} approximate the pairwise optimal transport. 

\noindent{\bf Grouped data:}
We compared the supervisory vector fields $\dot{\psi}_{c_1,c_2}$ of both \methodabb{} and generalized geodesics against the pairwise OT on a synthetic dataset of three conditional distributions, each having the distribution of two-component Gaussian mixtures (Fig.~\ref{fig:synthetic}~(a)). Note that \methodabb{} more closely resembles the numerical approximation of the true pairwise OT \citep{flamary2021pot}.  Denoting by $(x_1, T_{c_1\to c_2} (x_1))= (\dot{\psi}_{c_1,c_2}(0), \dot{\psi}_{c_1,c_2}(1)) $ the coupling of $P_{c_1}, P_{c_2}$ by a given method, 
we also validated this result quantitatively with $\EE_{x_1, c_1,c_2}[\|T_{c_1 \to c_2} (x_1) -T_{c_1\to c_2}^{\rm OT} (x_1)\|^2]$ (Table \ref{tab:synthetic}~(a)), where $T_{c_1\to c_2}^{\rm OT}$ is the ground-truth pairwise optimal transport map that minimizes~\eqref{eq:A2AoptP}.

\begin{table}[tbp]
  \begin{minipage}[b]{0.45\columnwidth}
  (a) Grouped data (Fig.~\ref{fig:synthetic}~(a))
   \centering
\begin{tabular}{c|c}
\hline
     Method & MSE from pairwise OT \\
    \hline\hline
     Ours ($\pi_\beta^*$) &  
       $\boldsymbol{(5.81\pm 2.22)\times10^{-2}}$\\
       \renewcommand*{\arraystretch}{0.5}
     \begin{tabular}{c}
          Generalized\\geodesic
     \end{tabular} \renewcommand*{\arraystretch}{1}&  $(1.03\pm0.04)\times 10 ^0$\\
    \hline
    \end{tabular}
  \end{minipage}
  \hspace{0.04\columnwidth}
  \begin{minipage}[b]{0.45\columnwidth}
  (b) Non-grouped data (Fig.~\ref{fig:synthetic}~(b))
\centering
\begin{tabular}{c|c}
\hline
    Method & MSE from pairwise OT \\
    \hline\hline
    \methodabb{} (Ours) &  $\boldsymbol{(1.51\pm 0.17)\times10^{-2}}$\\
     Partial diffusion &  $(6.77\pm 0.14)\times 10^{-2}$\\
    Multimarginal SI & $(4.90\pm 0.28)\times 10^{-2}$ \\
    \hline
    \end{tabular}
  \end{minipage}
    \vspace{1mm}
   \caption{The discrepancy from pairwise OT for synthetic data. The error in the table shows the standard deviation over 10 evaluation runs.}
    \label{tab:synthetic}
    \vspace{-4mm}
\end{table}

\begin{figure}
    \centering
    \includegraphics[width=\linewidth]{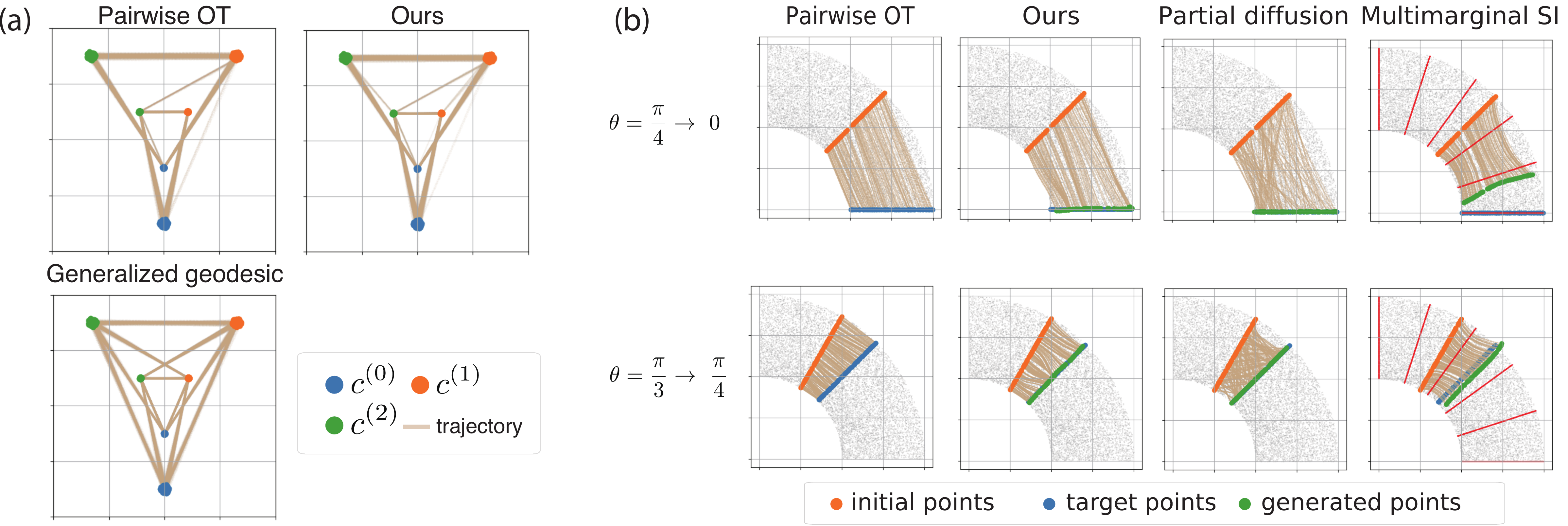}
    \caption{(a) Results for grouped data. The sample size was $10^3$ in each of 3 conditions $\{c^{(0)},c^{(1)},c^{(2)}\}$ and $\beta=10^4$. (b) Results for non-grouped data. The gray points in the background show samples from the training dataset. The presented pairwise OT is a numerical approximation by \citep{flamary2021pot}. The red lines in the right column shows the bins for training Multimarginal SI.}
    \label{fig:synthetic}
   \vspace{-4mm}
\end{figure}
\noindent
\textbf{Non-grouped data:}
We compared the transfer map of \methodabb{} against ground-truth pairwise OT, partial diffusion, and Multimarginal SI on non-\nonregressdata. The dataset consisted of samples from a 2D polar coordinate quadrant ($r \in [1,2]$, $\theta \in [0, \pi/2]$), where $\theta$ was a continuous condition and $P_\theta$ was uniform along $r$. This represents a non-\nonregressdata. For Multimarginal SI, we discretized $\theta$ into $K=5$ bins because it can only handle \nonregressdata. See also Appendix~\ref{sec:experiments-appendix}.
We visualize the learnt transport in Fig.~\ref{fig:synthetic}~(b). Partial diffusion produced nearly random couplings. Multimarginal SI failed to generate the target distribution's marginals because of discretization. Quantitative evaluations in Table~\ref{tab:synthetic}~(b) also confirm that \methodabb{} approximates pairwise OT more accurately than the rivals.
\subsection{Nearby Sampling for Molecular Optimization} \label{sec:real-world-experiments}
\begin{figure}
\begin{minipage}{0.45\linewidth}
\centering
\includegraphics[width=\linewidth]{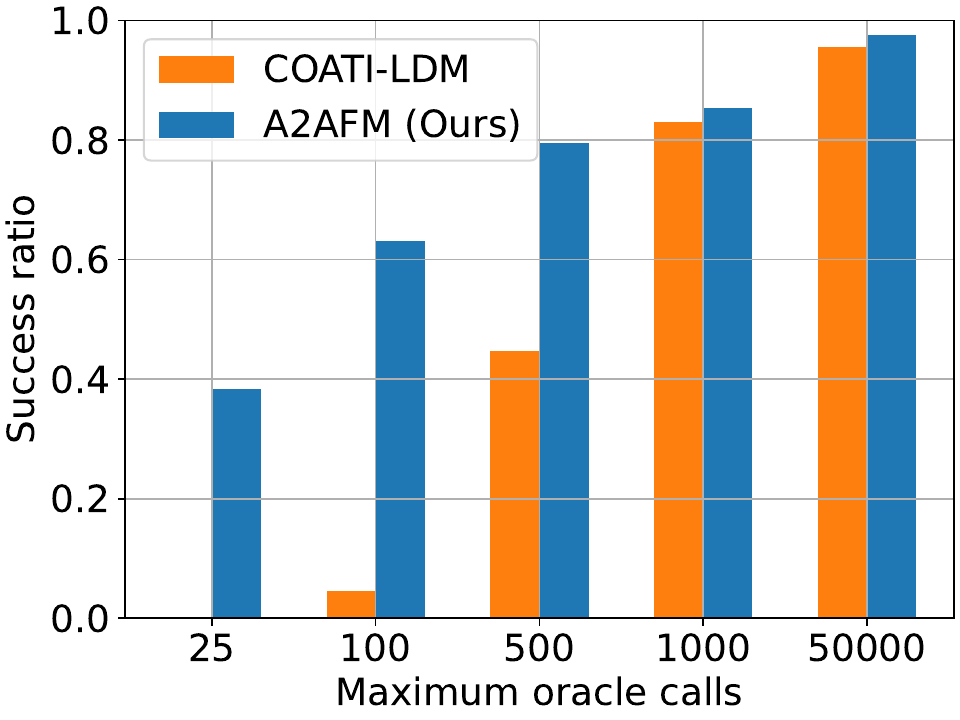}
\caption{Sampling Efficiency of \methodabb{} and the partial diffusion model of~\citep{kaufman2024coati}. } 
\label{fig:bar_plot}
\end{minipage}
  \hspace{0.04\columnwidth} 
\begin{minipage}{0.45\linewidth}
\centering
    \includegraphics[width=\linewidth]{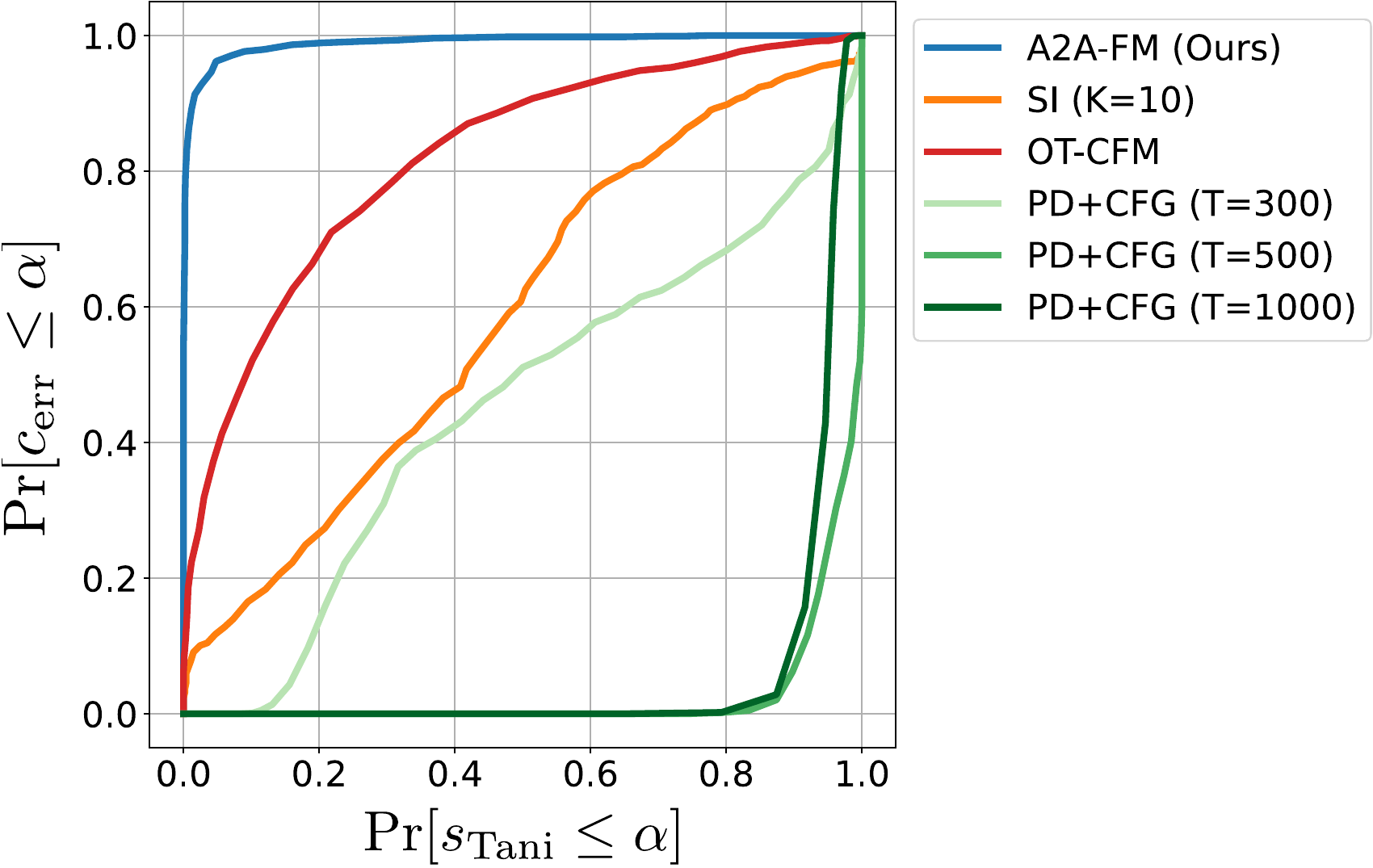}
\caption{Sampling Efficiency Curve for LogP-TPSA benchmark. See Appendix~\ref{sec:logp-tpsa-appendix} and Table \ref{tab:AUC_values} for notation. 
 $K$ is the number of discretization bins.} 
\label{fig:roc-curve}
\end{minipage}
\vspace{-5mm}
\end{figure}
\begin{table}
\vspace{-5mm}
\begin{minipage}{0.40\linewidth}
    \centering
    \begin{tabular}{c|c}
    \hline
         Method& Success $(\%)$ \\
         \hline\hline
         \begin{tabular}{c}
DESMILES~\citep{maragakis2020deep}
         \end{tabular}&  $76.9$ \\
         \begin{tabular}{c}
QMO~\citep{hoffman2022optimizing}
         \end{tabular} &  $92.8$ \\
         \begin{tabular}{c}
MolMIM~\citep{reidenbach2022improving}
         \end{tabular} &  $94.6$ \\
         \begin{tabular}{c}
COATI-LDM~\citep{Kaufman2024}
         \end{tabular} &  $95.6$ \\
         \hline
         \textbf{\methodabb{} (Ours)}  &  $\textbf{97.5}$  \\
         \hline
    \end{tabular}
    \vspace{1mm}
    \caption{Nearby sampling success rate.}
    \label{tab:QED}
\end{minipage}
  \hspace{0.04\columnwidth} 
\begin{minipage}{0.50\linewidth}
 \centering
    \begin{tabular}{c|c}
        \hline
         Method& AUC Values \\
         \hline\hline
         PD+CFG (T=500)&  $0.027$ \\
         PD+CFG (T=300)&  $0.450$ \\
         PD+CFG (T=1000)&  $0.060$ \\
         SI (K=10)&  $0.583$ \\
         OT-CFM &  $0.819$ \\
         \hline
         \textbf{\methodabb{} (Ours)}  &  $\boldsymbol{0.990}$\\
         \hline
    \end{tabular}
    \vspace{1mm}
    \caption{AUC Values of LogP-TPSA task. PD+CFG is the Partial diffusion method with Classifier Free Guidance with $T$ noise steps. }
    \label{tab:AUC_values}
\end{minipage}
\vspace{-5mm}
\end{table}

Molecule design is a multi-constraint task: a \textit{lead} must bind strongly and selectively while remaining drug-like and nontoxic. Random candidates rarely meet every criterion, so chemists traditionally resort to scaffold hopping—generating many close analogs of existing molecules in the hope that some retain favorable properties while mitigating undesirable ones.
The machine learning approach frames this as \textit{nearby sampling}, in which the goal is to sample a molecule with a target property within a structural neighborhood of a reference molecule.

\begin{wrapfigure}[22]{R}{0.45\textwidth}
    \vspace{-5mm}
    \centering
    \includegraphics[width=\linewidth]{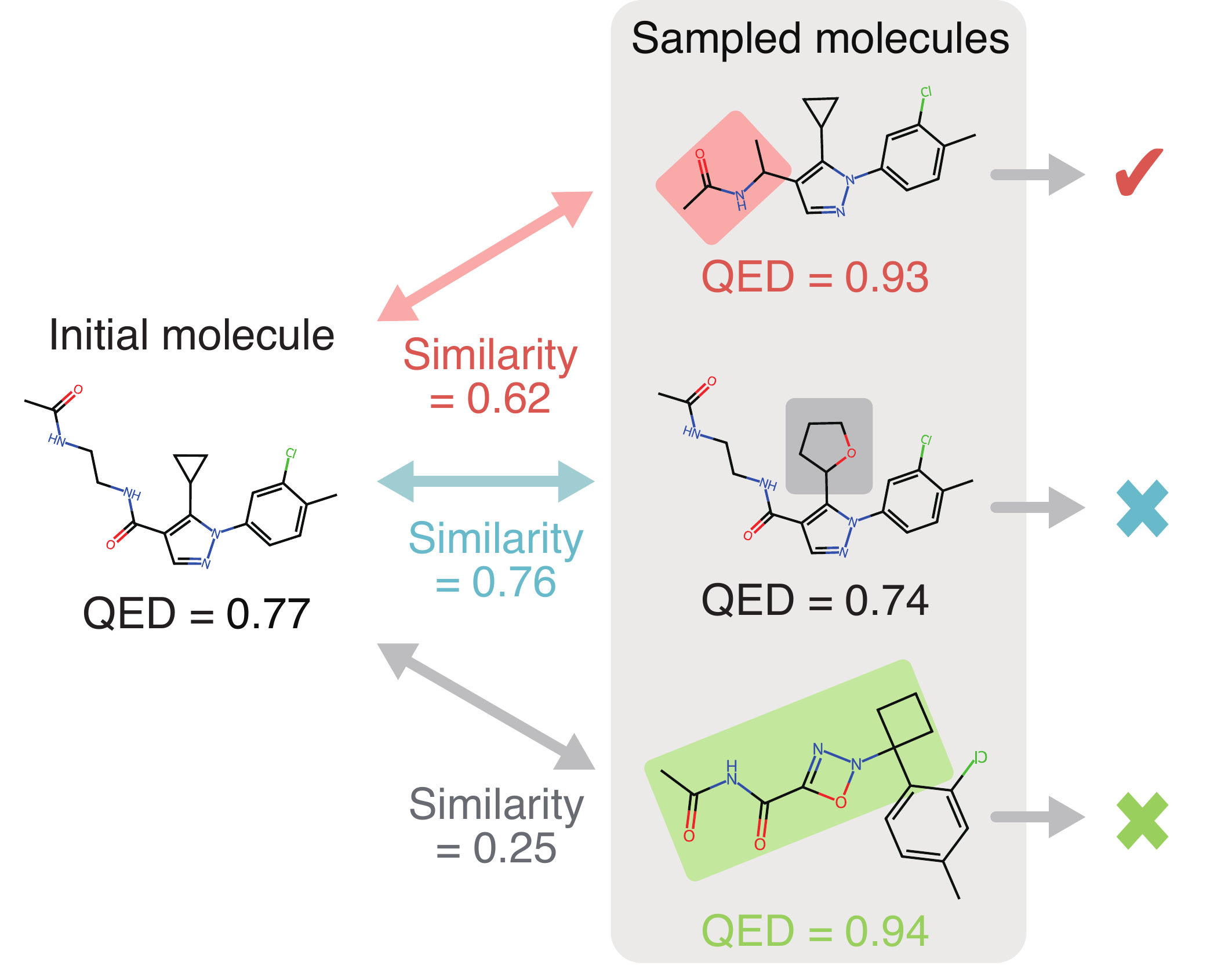}
    \caption{QED optimization task. \textit{An initial molecule is marked success} if there exists a molecule with Tanimoto similarity $\geq 0.4$ and QED $\geq 0.9$ among the molecules sampled by transferring the original. The size of the sample set is called the maximum oracle calls. See Table~\ref{tab:QED} for the ratio of initial molecules marked success in the dataset of~\citet{Jin2019-gh}.}
    \label{fig:qed-concept}
\end{wrapfigure}
\noindent\textbf{QED experiment:} For a Quantitative Estimate of Drug-likeness~(QED) optimization task~\citep{Jin2019-gh}, the goal is to transfer a molecule with QED $\leq 0.8$ to QED $\geq 0.9$ while maintaining the original structure as much as possible. The similarity to the original is measured by \textit{Tanimoto similarity}, whose threshold is set to $\geq 0.4$. \methodabb{} was trained on a 500K ZINC22 subset, where $\X$ represented the latent space of molecular representations \citep{kaufman2024coati} and $\C = \mathbb{R}^{32}$ was the space of QED embeddings. Following the same protocol as previous methods~\citep{kaufman2024coati, hoffman2022optimizing, reidenbach2022improving}, we evaluated the algorithm by the success rate of discovering a molecule of desired property (QED, similarity) within the prescribed number of \textit{oracle} sample calls made from the initial molecule (Fig.~\ref{fig:qed-concept}). See Table \ref{tab:QED} for the comparison against SOTA \citep{kaufman2024coati}, and see Appendix~\ref{sec:mol-appendix} for details and generated samples.

The results in Table~\ref{tab:QED} show that \methodabb{} surpasses SOTA on default oracle calls (50000). Fig.~\ref{fig:bar_plot} also shows that we achieve much higher success rates across all maximum oracle calls.  We note that, unlike QED which is computationally inexpensive to validate, other properties like $\Delta \Delta G$~\cite{wang2013modeling,abel2017advancing,lee2020alchemical,zhu2022large} and vertical ionization potential~\citep{zhan2003ionization} require costly calculations like DFT and FEP, and biological assays.
For such cases, making many oracle calls can be infeasible, so achieving high sampling efficiency is a feat of significant scientific interest.

\noindent\textbf{LogP\&TPSA experiment:}  To evaluate the sampling efficiency in all-to-all condition transfer task, we conducted the nearby sampling similar to the QED experiment, except that we chose random 1,024 molecules from ZINC22 as the initial molecules, and aimed at changing their two other properties (\textit{LogP} and \textit{TPSA}) to a randomly selected pair of target values embedded in $\C = \RR^{32 \times 2}$. 
For evaluation, we measured Tanimoto similarity $s_{\rm Tani}$ and normalized condition error $c_{\rm err}$ for each transferred sample, and plotted $\operatorname{Pr}(c_{\rm err} \leq a) $ against $\operatorname{Pr}(s_{\rm Tani} \leq a)$ to visualize how much we need to trade off $s_{\rm Tani}$ to increase the probability of sampling molecules within the desired $c_{err}$ threshold. \methodabb{} substantially outperformed rivals in AUC (Fig.~\ref{fig:roc-curve} and Table \ref{tab:AUC_values}). 
See also Appendix~\ref{sec:logp-tpsa-appendix}.

\subsection{Additional experiments}\label{sec:ablation}
\textbf{Computational cost:} To apply methods like~\citep{albergo2023multimarginal, atanackovic2024metaflowmatchingintegrating} requiring \nonregressdata~(Fig.~\ref{fig:regressivedata}~(b), left), data $D=\{(x,c)\}$ is needed to be partitioned into $K$ bins of $M$ samples with similar $c$'s, constrained by $|D|=M \times K$. A choice of large $K$ (small $M$) impairs reliable $P_c$ estimation and, for Multimarginal SI, leads to $O(K^2)$ optimization costs. Meanwhile, small $K$ (large $M$) coarsens the partitions, reducing precision (see Fig.~\ref{fig:si-computational}, Appendix \ref{sec:app_additional}). Rising $K$ up to $K=|D|$ is computationally infeasible for Multimarginal SI since it costs $O(K^2)$ for pairwise optimization. In contrast, the computational cost of A2A-FM depends only on $|D|$, which scales identically to OT-CFM (different only in its cost function) and is independent of $K$ for a fixed $|D|$, as shown in Table~\ref{tab:si-computational}, Appendix \ref{sec:app_additional}.

\textbf{The choice of $\beta$:}
As in the case of \citep{chemseddine2025conditional}, the selection of the parameter $\beta$ is crucial to the performance. For grouped datasets, $\beta$ can be set sufficiently large (see Section~\ref{sec:intuition}). However, the same method does not directly apply to the general case due to the trade-off; a large $\beta$ may fail to accurately approximate pairwise OT, while a small $\beta$ can lead to reduced precision in the terminals (Fig.~\ref{fig:add-beta-samples}, Appendix \ref{sec:app_additional}). Although the difficulty of this trade-off is a potential limitation of \methodabb{}, 
we empirically observed that the choice of  $\beta = N^{1/(2d_c)}$ inspired by \citep{Empirical} was effective in achieving a good balance.  We validated this heuristic and assessed the method's robustness to $\beta$ on synthetic non-grouped datasets (Fig.~\ref{fig:add-beta}, Appendix \ref{sec:app_additional}). The results confirm that the optimal $\beta$ aligns with our heuristic and demonstrate the method's stability across a range of $\beta$ values spanning \textit{an order of magnitude.}
{In Appendix \ref{sec:beta-rate-appendix}, we also dicuss a theretical necessary condition on the rate of $\beta$ that the convergence in \ref{prop:convergence} holds.

\section{Discussion} 

\textbf{An advantage of using pairwise OTs}: 
Section \ref{sec:experiments} demonstrated that \methodabb{} performs competitively in transporting $\prob_{c_1}$ to $\prob_{c_2}$ in real-world settings.  
The efficacy of \methodabb{} in condition transfer task may be partly explained by its connection to the function representation theorem \citep{StrongFRL}. This theorem states that if a random variable $X$ has a feature $C = g_c(X)$ with $g_c$ deterministic, then there exists an independent feature $Z = g_z(X)$ such that $(C, Z)$ can generate $X$ (i.e., $X$ decomposes into $C$ and $Z$).  
When transporting a sample $x_{c_1} \sim P(\cdot | C(X) = c_1) = P_{c_1}(\cdot)$ to $x_{c_2}$, one may leverage the representation $x_{c_1} = f(z, c_1)$ for an invertible function $f$, and perform transport by mapping $f(z, c_1)$ to $x_{c_2} = f(z, c_2)$ while \textit{keeping the $z$-component fixed}.
This representation is closely related to domain-adversarial training \citep{DANN}, which effectively seeks $z$ in this expression for condition (domain)-invariant inference, and is used in \cite{mroueh_wasserstein_2020} for transporting between conditional distributions.
It is also known that, in the case of one-dimensional $x$, the transport of type $f(z, c_1)\mapsto f(z, c_2)$ agrees with the OT with quadratic cost \citep[Sec 2.2]{villani2}. 
Also, if $\| f(z_1,c_1) - f(z_2,c_2)\|  \geq \|f(z_1, c_1) - f(z_1, c_2)\|$ for all $(z_1,c_1)$ and $(z_2, c_2)$, then the OT trivially favors a plan that does not alter $z$. 
Such a situation may arise when $z$ takes discrete values such that any modification to $z$ results in a larger change in $x$ than modifying $c$. 
This is also seen in Fig.~\ref{fig:synthetic}~(a), where each cluster is transported to another cluster.

\textbf{Cycle consistency}: 
It is reasonable to require the consistency property $T_{c_2 \to c_1} \circ T_{c_1 \to c_2} = \text{id}$ for transport maps. In \methodabb{}, this can be ensured by enforcing the antisymmetry condition $v_{c_1,c_2} = - v_{c_2,c_1}$ in the model of vector fields.  
More generally, in some transfer applications, one might prefer the cyclic consistency property: $T_{c_2 \to c_3} \circ  T_{c_1 \to c_2} = T_{c_1 \to c_3}$. For instance, some prior works~\cite{isobe2024extended,albergo2023multimarginal} build this cycle consistenct directly into their models.
However, \methodabb{} may not necessarily enforce this property, because OT does not generally satisfy it. On the other hand, adopting this antisymmetry in QED experiment enhanced the success rate from 94.6\% to 97.5\%, suggesting some justification for this regularization. While the full effect of this anti-symmetry restriction is a matter of future work, we emphasize that \methodabb{} provides a scalable method for non-grouped data, whereas conventional methods with cycle consistency~\cite{isobe2024extended,albergo2023multimarginal} are inapplicable or computationally prohibitive for such data classes (see Appendix~\ref{sec:ablation} for details on computational cost).

\textbf{\methodabb{} on large \nonregressdata:}  
Many text-labeled image datasets can be regarded as \nonregressdata, as they often provide sufficient data for each independent textual condition.
For example, while a sample `yellow dog' may be rare, there can be many `dog' samples and `yellow' samples. As demonstrated in Appendix \ref{sec:app_additional}, \methodabb{} scales effectively to such image datasets.  
For \nonregressdata, when category-specific classifiers can be trained effectively, classifier-dependent methods are viable options for condition transfer~\citep{zhao2022egsde, preechakul2022diffusion}.
However, we note that \methodabb{} is designed for more general situations in which such classifiers/regressors can be unavailable or unreliable. 

\section{Conclusion}\label{sec:conclution}
 We proposed \methodabb{}, an FM-based condition transfer method that can learn pairwise OT from a general data type, including the domain of continuous conditions. To achieve this purpose, 
we introduced an objective function that realizes pairwise optimal transport in the infinite sample limit. The balance of dataset size and the hyperparameter $\beta$ can pose a limitation, but we provided a stable heuristic whose theory can be a subject of possible future research. We applied \methodabb{} to a chemical application of modifying the target attribute of a molecule, demonstrating state-of-the-art performance.

\section{Acknowledgements}
This work was partially conducted during KI's summer internship at Preferred Networks. KF is partially supported by JST CREST JPMJCR2015 and JSPS Grant-in-Aid for Transformative Research Areas (A) 22H05106.

\bibliographystyle{abbrvnat}
\bibliography{biblio}

\clearpage

\appendix

\section*{Technical Appendices and Supplementary Material}
\section{Theories}
\label{sec:theory-appendix}

\subsection{Notations} 
In this section, we provide the notations that we will be using in the ensuing mathematical formulations and statements. 

\begin{itemize}
\item $\X \subset \RR^{d_x}$: Space of observations. 
\item $\C \subset \RR^{d_c}$: Space of conditions.
\item $\Prob(\A)$: the set of all distributions on measure space $\A$. 

\item $\varpi_{i_1, \ldots,i_k}$ : the projection onto ${i_1,\ldots,i_k}$th component. For example, $\varpi_1:\A\times\B\to\A, ~~ \varpi_2:\A\times\B\to \B$, and $\varpi_{2,3} : \A \times \B \times \C \to \B \times \C$.
\item $\Gamma(\mu_1, \mu_2) :=\{ \Pi\in \Prob(\A \times \A)\mid \varpi_1\#\Pi =\mu_1 \text{ and } \varpi_2\#\Pi = \mu_2\}$ for $\mu_1,\mu_2\in \Prob(\A)$.
\item $Q_{A|B=b}\in \mathcal{P}(\mathcal{A})$: Regular conditional distribution of $Q\in \mathcal{P}(\A\times \B)$ where $A,B$ are respectively the random variables on $\A,\B$ with distribution $Q$.
\item $P\otimes Q$: For probability distributions $P$ on $\X$ and $Q$ on $\mathcal{Y}$, $P\otimes Q$ denotes the product probability defined by $(P\otimes Q)(A\times B)=P(A)Q(B)$. This gives independent marginals. 
\end{itemize}

\subsection{Convergence to the pairwise optimal transport}

Let $\Prob_2(\X)$ denote the space of probabilities with second moments; i.e., 
\[
\Prob_2(\X) := \left\{ P\in\Prob(\X)\mid \int_\X \|x\|^2 dP(x)<\infty \right\}.
\]
The 2-Wasserstein distance $W_2(P,Q)$ for $P,Q\in \Prob_2(\X)$ is defined by 
\[
W_2^2(P,Q) = \inf_{\Pi\in \Gamma(P,Q)} \int_{\X\times\X} \|x-y\|^2 d\Pi(x,y).
\]
In the sequel, $\Prob_2(\X)$ is considered to be a metric space with distance $W_2$.

\subsubsection{Conditional Wasserstein distance}
As preliminaries, we first review the conditional Wasserstein distance, which was introduced in~\cite{chemseddine2025conditional,Alfonso2023-zo,kerrigan2024dynamic}.
Let $\Z$ be a subset of $\mathbb{R}^r$, which is used as a set of conditions and let $P\in \Prob(\X\times\Z)$.   
For example, in the setting of conditional generations, $\Z = \C$. 

To relate the transport plans for conditional distributions and the plans for joint distributions, we introduce the 4-plans as in~\cite{chemseddine2025conditional}.  Let $\nu$ be a probability in $\Prob(\Z)$.  For $\nu\in \Prob(\Z)$, we define the class of joint distributions with the same marginal on $\Z$:
\[
\Prob(\X\times\Z; \nu):=\{P\in \Prob(\X\times\Z)\mid (\varpi_2 \# P) = \nu\}.
\]
For two probabilities $P,Q\in \Prob(\X\times\Z;\nu)$, define $\Gamma^4_\nu(P,Q)$ by 
\begin{equation}
    \Gamma^4_\nu(P,Q) :=\{ \Pi\in \Prob((\X\times\Z)^2)\mid \varpi_{2,4}\# \Pi = \Delta\# \nu\},
\end{equation}
where $\Delta:\Z\to \Z\times\Z$, $z\mapsto (z,z)$, is the diagonal map {and $\varpi_{2,4}$ is the projection map to the second and fourth component.

For $P,Q\in\Prob(\X\times\Z;\nu)$ with finite $p$-th moments, the conditional $p$-Wasserstein distance $W_{p,\nu}(P,Q)$ is defined  
as follows~\citep{chemseddine2025conditional,kerrigan2024dynamic, Alfonso2023-zo}:
\begin{equation}
W_{p,\nu}(P,Q):=\left( \inf_{\Pi\in\Gamma^4_\nu(P,Q)} \int \| (x_1,z_1)-(x_2,z_2)\|^p d\Pi \right)^{1/p}.
\end{equation}

It is known \citep[Prop. 1,][]{chemseddine2025conditional} that the conditional $p$-Wasserstein distance is in fact the average $p$-Wasserstein distances of conditional distributions;
\begin{proposition}\label{prop:cond_W}
Let $(X_1,Z)$ and $(X_2,Z)$ be random variables on $\X\times\Z$ with distributions $P^1$ and $P^2\in \Prob(\X\times\Z;\nu)$, which both have finite $p$-th moments. Then, 
\begin{equation}
        W_{p,\nu}(P^1,P^2)^p = \int_\Z W_p(P^1_{X_1|Z=z}, P^2_{X_2|Z=z})^p d\nu(z).
\end{equation}
\end{proposition}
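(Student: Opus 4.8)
The plan is to use the diagonal constraint on the $\Z$-coordinates to collapse the conditional Wasserstein cost to an ordinary transport cost on $\X$, and then to establish the two inequalities by disintegrating over $z\in\Z$. First I would observe that for any $\Pi\in\Gamma^4_\nu(P^1,P^2)$ the constraint $\varpi_{2,4}\#\Pi=\Delta\#\nu$ forces $z_1=z_2$ $\Pi$-almost surely, with this common value distributed as $\nu$, so that
\[
\int \|(x_1,z_1)-(x_2,z_2)\|^p\,d\Pi \;=\; \int \|x_1-x_2\|^p\,d\Pi .
\]
(Here I read $\Gamma^4_\nu(P^1,P^2)$ as also imposing $\varpi_{1,2}\#\Pi=P^1$ and $\varpi_{3,4}\#\Pi=P^2$, as in \citep{chemseddine2025conditional}.) It then remains to show
\[
\inf_{\Pi\in\Gamma^4_\nu(P^1,P^2)}\int\|x_1-x_2\|^p\,d\Pi \;=\; \int_\Z W_p\bigl(P^1_{X_1|Z=z},P^2_{X_2|Z=z}\bigr)^p\,d\nu(z).
\]

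For the inequality ``$\ge$'', I would take an arbitrary $\Pi\in\Gamma^4_\nu(P^1,P^2)$ and apply the disintegration theorem (available since $\X$ and $\Z$ are Polish) along the map $((x_1,z_1),(x_2,z_2))\mapsto z_1$, writing $\Pi=\int_\Z\Pi_z\,d\nu(z)$ with each $\Pi_z\in\Prob(\X\times\X)$, using that $z_1=z_2=z$ $\Pi$-a.s. The key bookkeeping is that marginalization commutes with disintegration, so that $\varpi_1\#\Pi_z=P^1_{X_1|Z=z}$ and $\varpi_2\#\Pi_z=P^2_{X_2|Z=z}$ for $\nu$-a.e.\ $z$; this follows from the uniqueness of disintegrations applied to $\varpi_{1,2}\#\Pi=P^1$ and $\varpi_{3,4}\#\Pi=P^2$. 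Hence $\Pi_z\in\Gamma(P^1_{X_1|Z=z},P^2_{X_2|Z=z})$ and $\int\|x_1-x_2\|^p\,d\Pi_z\ge W_p(P^1_{X_1|Z=z},P^2_{X_2|Z=z})^p$; integrating against $\nu$ and taking the infimum over $\Pi$ yields the inequality.

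For ``$\le$'', I would build a feasible plan attaining the right-hand side. By standard optimal transport theory \citep{villani}, for $\nu$-a.e.\ $z$ there is an optimal coupling $\gamma_z\in\Gamma(P^1_{X_1|Z=z},P^2_{X_2|Z=z})$ realizing $W_p^p$; granting a measurable choice $z\mapsto\gamma_z$, and writing $\iota_z:\X\times\X\to(\X\times\Z)^2$, $(x_1,x_2)\mapsto((x_1,z),(x_2,z))$, I would set $\Pi:=\int_\Z(\iota_z\#\gamma_z)\,d\nu(z)$. A direct check gives $\varpi_{2,4}\#\Pi=\Delta\#\nu$ and $\varpi_{1,2}\#\Pi=\int_\Z P^1_{X_1|Z=z}\otimes\delta_z\,d\nu(z)=P^1$, with the fourth marginal similarly equal to $P^2$, so $\Pi\in\Gamma^4_\nu(P^1,P^2)$, while $\int\|x_1-x_2\|^p\,d\Pi=\int_\Z W_p(P^1_{X_1|Z=z},P^2_{X_2|Z=z})^p\,d\nu(z)$, which closes the proof.

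The step I expect to be the main obstacle is the measurable selection $z\mapsto\gamma_z$ in the ``$\le$'' part. I would handle it by noting that $z\mapsto(P^1_{X_1|Z=z},P^2_{X_2|Z=z})$ is measurable into $\Prob_2(\X)\times\Prob_2(\X)$ (from the disintegration theorem), that the set-valued map sending a pair of measures to its nonempty, closed set of $W_p$-optimal plans has measurable graph, and then invoking the Kuratowski--Ryll-Nardzewski selection theorem; alternatively one can select $\varepsilon_n$-optimal plans measurably and pass to a limit. The remaining ingredients — existence of $W_p$-optimal couplings, the disintegration theorem, and commutation of disintegration with coordinate projections — are routine measure theory.
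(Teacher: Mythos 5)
Your proof is correct, and it is essentially the standard argument: the paper itself does not prove this proposition but imports it as Proposition~1 of \citet{chemseddine2025conditional}, and your two-inequality scheme (disintegration of an arbitrary $4$-plan over the common $z$-coordinate for ``$\ge$'', and gluing a measurably selected family of conditionally optimal plans via $\iota_z$ for ``$\le$'') is exactly the route taken there. Two points are worth commending. First, you correctly observe that the set $\Gamma^4_\nu(P^1,P^2)$ as written in the appendix only records the diagonal constraint $\varpi_{2,4}\#\Pi=\Delta\#\nu$; the marginal constraints $\varpi_{1,2}\#\Pi=P^1$ and $\varpi_{3,4}\#\Pi=P^2$ are indeed intended (otherwise the infimum is trivially zero), so your reading is the right one. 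Second, you correctly isolate the only nontrivial technical step, the measurable selection $z\mapsto\gamma_z$ of optimal couplings, and your proposed resolution (measurability of $z\mapsto(P^1_{X_1|Z=z},P^2_{X_2|Z=z})$ plus a measurable-selection theorem, or equivalently the standard measurable-selection result for optimal plans in \citet{villani}) is the accepted way to close it. The only cosmetic omission is that you do not explicitly note the measurability of $z\mapsto W_p^p(P^1_{X_1|Z=z},P^2_{X_2|Z=z})$ needed to make the right-hand integral meaningful, but this follows from the same disintegration-measurability facts you already invoke.
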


\subsubsection{Convergence of \methodabb{}}

Let $\Done={(x_1^{(i)},c_1^{(i)})}_{i=1}^N$ and $\Dtwo={(x_2^{(i)},c_2^{(i)})}_{i=1}^N$ be two independent copies of i.i.d.~samples with distribution $P\in\Prob_2(\X\times\C)$.  Recall that the proposed \methodabb{} uses the optimal plan or permutation $\pi_{N,\beta}^*$ which achieves the minimum: 
\begin{equation}\label{eq:emp_obj}
\min_{\pi\in\mathfrak{S}_N} \sum_{i=1}^N \|x_1^{(i)}-x^{(\pi(i))}_2\|^2 + \beta\bigl\{ \|c_1^{(i)} - c^{(\pi(i))}_1\|^2 + \|c_2^{(i)} - c_2^{(\pi(i))}\|^2\bigr\}.
\end{equation}

We formulate $\pi^*_{N,\beta}$ as an optimal transport plan that gives an empirical estimator of the conditional Wasserstein distance.  
For this purpose, we introduce augmented data
\begin{equation}
\tilde{\mathcal{D}}_1:=\{(x_1^{(i)}, c_1^{(i)},c_2^{(i)})\}_{i=1}^N\qquad\text{and}\qquad \tilde{\mathcal{D}}_2:=\{ (x_2^{(i)}, c_1^{(i)},c_2^{(i)}) \}_{i=1}^N.
\end{equation}
For notational simplicity, we use $z^{(i)}=(c_1^{(i)},c_2^{(i)})$, and thus, $\tilde{\mathcal{D}}_1:=\{(x_1^{(i)}, z^{(i)})\}_{i=1}^N$ and $\tilde{\mathcal{D}}_2:=\{ (x_2^{(i)}, z^{(i)})\}_{i=1}^N$.

First, it is easy to see that the objective function \eqref{eq:emp_obj} of the coupling is equal to 
\[
\min_{\pi\in\mathfrak{S}_N} \sum_{i=1}^N d_\beta\bigl((x_1^{(i)},z^{(i)}), (x_2^{(\pi(i))}, z^{(\pi(i))}) \bigr),
\]
where the cost function $d_\beta$ on $\X\times(\C\times\C)$ is given by
\begin{equation}
   d_\beta((x_1, z_1), (x_2, z_2)) = \|x_1- x_2\|^2 + \beta \|z_1- z_2\|^2.   
\end{equation}
This means that the minimizer of \eqref{eq:emp_obj} gives the optimal transport plan between the augmented datasets $\tilde{\mathcal{D}}_1$ and $\tilde{\mathcal{D}}_2$ with the cost $d_\beta$.  We express this optimal plan by $\Pi_{N}^\beta$, an atomic distribution in $\Prob_2((\X\times\C\times\C)\times(\X\times\C\times\C))$.

Second, consider the following random variables and distributions on $\X\times\C\times\C$;
\begin{align}
\begin{split}
    (X_1,C_1,C_2) & \sim Q^1:=\varpi_{1,2,4}\#(P\otimes P),   \\
    (X_2,C_1,C_2
    )& \sim Q^2:=\varpi_{3,2,4}\#(P\otimes P).
    \end{split}
\end{align}
For $Q^1$, the variables $X_1$ and $C_1$ are coupled with the joint distribution $P$, while $C_2$ is independent;  for $Q_2$, on the other hand, $X_2$ and $C_2$ are coupled with distribution $P$, and $C_1$ is independent. 
The datasets $\tilde{\mathcal{D}}_1$ and $\tilde{\mathcal{D}}_2$ are obviously i.i.d.~samples with distributiion $Q^1$ and $Q^2$, respectively. Note that the distribution of $Z:=(C_1, C_2)$ is equal to $\nu:=(\varpi_2\#P)\otimes(\varpi_2\#P)\in \mathcal{P}(\C\times\C)$, and common to  $Q^1$ and $Q^2$.  An important fact is that the conditional distributions given $Z$ satisfy
\begin{align}\label{eq:QtoP}
    Q^1_{X_1|Z=(c_1,c_2)} & = Q^1_{X_1|C_1=c_1} = P_{X_1|C_1=c_1}, \nonumber \\
    Q^2_{X_2|Z=(c_1,c_2)} & = Q^2_{X_2|C_2=c_2} = P_{X_2|C_2=c_2},
\end{align}
where we use the independence between $(X_1,C_1)$ and $C_2$ for $Q^1$, and a similar relation for $Q^2$.  

Let $\widehat{Q}^a_N$ ($a=1,2$) be the empirical esitribution of $\tilde{\mathcal{D}}_a$, and $\hat{\nu}_N$ be that of $z^{(i)}=(c_1^{(i)},c_2^{(i)})$; that is, 
\[
\widehat{Q}^a_N = \frac{1}{N}\sum_{i=1}^N \delta_{(x^{(i)}_a,Z^{(i)})}  \quad (a=1,2)\quad\text{and}\quad 
\hat{\nu}_N = \frac{1}{N}\sum_{i=1}^N \delta_{z^{(i)}} \; \in \Prob_2(\C\times\C).
\]
Note that, since $z$-component in $\tilde{\mathcal{D}}_1$ and $\tilde{\mathcal{D}}_2$ are identical, the coupling $\Pi_{N}^\beta$ is a plan in $\Gamma(\widehat{Q}^1_N,\widehat{Q}^2_N; \widehat{\nu}_N)$.

\medskip

With the above preparation, it is not difficult to derive the next result, which shows that, as regularization $\beta$ and sample size $N$ go to infinity, the above empirical optimal plan $\Pi_N^\beta$ converges, up to subsequences, to an optimal plan that gives optimal transport plans for all pairs of conditional probabilities $P_{X|C=c_1}$ and $P_{X|C=c_2}$.

\begin{theorem}\label{thm:convergence}
Suppose that $\X$ and $\C$ are compact subsets of $\mathbb{R}^{d_x}$ and $\mathbb{R}^{d_c}$, respectively, and that $\beta_k\to \infty$ is an increasing sequence of positive numbers.  Let $\Pi_N^{\beta_k}$ be the optimal transport plan in $\Gamma^4_{\hat{\nu}}(\widehat{Q}^1_N, \widehat{Q}^2_N)$ for $W_{2,\beta_k}(\widehat{Q}^1_N,\widehat{Q}^2_N)$, defined as above.  
Then, the following results hold.
\begin{enumerate}[label=(\roman*), topsep=-4pt, partopsep=0pt, itemsep=3pt, parsep=0pt, leftmargin=20pt]
\item There is a subsequence $(N_k)$ so that $\Pi_{N_k}^{\beta_k}$ converges in $\Prob_2((\X\times\Z)^2)$ to $\Pi\in \Gamma^4_\nu(Q^1,Q^2)$ that is an optimal plan for $W_{2,\nu}(Q^1,Q^2)$, where $\Z =\C\times \C$. 
\item If the optimal plan $\Pi$ is identified as a 3-plan $\Gamma^3_\nu(Q^1,Q^2)$, it satisfies 
\begin{multline}\label{eq:4W_condW}
     \int_{\C\times\C}\int_{\X\times\X} \| x_1-x_2\|^2 d\Pi_{c_1,c_2}(x_1,x_2) d\varpi_2\#P(c_1)d\varpi_2\#P(c_2)\\
 = \int_{\C\times\C} W_{2}^2(P_{X|C=c_1},P_{X|C=c_2})d\varpi_2\#P(c_1)d\varpi_2\#P(c_2)
\end{multline}
where $\Pi_{c_1,c_2}=\Pi_z$ is the disintegration of the 3-plan $\Pi$. 
\item Consequently, $\Pi_{c_1,c_2}$ is the optimal plan to give $W_2(P_{X|C=c_1},P_{X|C=c_2})$ for $(\varpi_2\#P)\otimes(\varpi_2\#P)$-almost every $c_1,c_2$.
\end{enumerate}
\end{theorem}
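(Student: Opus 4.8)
The plan is to recognize $\Pi_N^{\beta}$ as the optimal transport plan, between the empirical measures $\widehat{Q}^1_N$ and $\widehat{Q}^2_N$ on $\X\times\Z$ with $\Z=\C\times\C$, for the penalized cost $d_\beta((x_1,z_1),(x_2,z_2))=\|x_1-x_2\|^2+\beta\|z_1-z_2\|^2$, and then to control the two limits $N\to\infty$ (empirical $\to$ population) and $\beta\to\infty$ (penalized transport $\to$ conditional transport) jointly. Write $W^2_{d_\beta}(P,Q):=\inf_{\Pi\in\Gamma(P,Q)}\int d_\beta\,d\Pi$. Since $\X$ and $\C$ are compact, $\X\times\Z$ is compact, so $\Prob((\X\times\Z)^2)=\Prob_2((\X\times\Z)^2)$ is weakly compact with $W_2$ metrizing weak convergence, and I would work on the probability-one event on which $\widehat{Q}^a_N\to Q^a$ ($a=1,2$) and $\widehat{\nu}_N\to\nu$ in $W_2$ (law of large numbers for empirical measures on a compact metric space). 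I would first record three facts. (A) For $P,Q\in\Prob(\X\times\Z;\nu)$, any $\Pi\in\Gamma^4_\nu(P,Q)$ has $z_1=z_2$ $\Pi$-a.s., so $\int d_\beta\,d\Pi=\int\|x_1-x_2\|^2\,d\Pi$; since $\Gamma^4_\nu(P,Q)\subseteq\Gamma(P,Q)$ this gives $W^2_{d_\beta}(P,Q)\le W^2_{2,\nu}(P,Q)$ for all $\beta$, and $\beta\mapsto W^2_{d_\beta}(P,Q)$ is non-decreasing. (B) $W^2_{d_\beta}(Q^1,Q^2)\to W^2_{2,\nu}(Q^1,Q^2)$ as $\beta\to\infty$: the $d_\beta$-optimal plan $\Pi^\beta$ satisfies $\beta\int\|z_1-z_2\|^2\,d\Pi^\beta\le W^2_{d_\beta}(Q^1,Q^2)\le W^2_{2,\nu}(Q^1,Q^2)$, so $\int\|z_1-z_2\|^2\,d\Pi^\beta\to 0$; any weak limit point $\Pi^\infty$ of $(\Pi^\beta)$ lies in $\Gamma^4_\nu(Q^1,Q^2)$, whence $W^2_{2,\nu}(Q^1,Q^2)\le\int\|x_1-x_2\|^2\,d\Pi^\infty\le\lim_\beta W^2_{d_\beta}(Q^1,Q^2)\le W^2_{2,\nu}(Q^1,Q^2)$ and all are equal. (C) For fixed $\beta$, stability of optimal transport under weak convergence of marginals (the cost $d_\beta$ being bounded and continuous on the compact space) gives $W^2_{d_\beta}(\widehat{Q}^1_N,\widehat{Q}^2_N)\to W^2_{d_\beta}(Q^1,Q^2)$ as $N\to\infty$.

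For (i), I would use (C) to choose an increasing sequence $N_k$ with $|W^2_{d_{\beta_k}}(\widehat{Q}^1_{N_k},\widehat{Q}^2_{N_k})-W^2_{d_{\beta_k}}(Q^1,Q^2)|<1/k$, then extract by compactness a subsequence along which $\Pi_{N_k}^{\beta_k}\to\Pi$ weakly. The $\X\times\Z$-marginals of $\Pi$ are $Q^1$ and $Q^2$ (they pass to the limit), and $\int\|z_1-z_2\|^2\,d\Pi_{N_k}^{\beta_k}\le W^2_{d_{\beta_k}}(\widehat{Q}^1_{N_k},\widehat{Q}^2_{N_k})/\beta_k\le(W^2_{2,\nu}(Q^1,Q^2)+1/k)/\beta_k\to 0$, so $\int\|z_1-z_2\|^2\,d\Pi=0$ and hence $z_1=z_2$ $\Pi$-a.s., i.e.\ $\Pi\in\Gamma^4_\nu(Q^1,Q^2)$. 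Finally the chain $W^2_{2,\nu}(Q^1,Q^2)\le\int\|x_1-x_2\|^2\,d\Pi=\lim_k\int\|x_1-x_2\|^2\,d\Pi_{N_k}^{\beta_k}\le\lim_k W^2_{d_{\beta_k}}(\widehat{Q}^1_{N_k},\widehat{Q}^2_{N_k})=\lim_k W^2_{d_{\beta_k}}(Q^1,Q^2)=W^2_{2,\nu}(Q^1,Q^2)$ --- where the second step is weak convergence against the bounded continuous function $\|x_1-x_2\|^2$ and the last equality combines the choice of $N_k$ with (A)--(B) --- forces equality throughout, so $\Pi$ is an optimal plan for $W_{2,\nu}(Q^1,Q^2)$.

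For (ii) and (iii), since $z_1=z_2=:z$ holds $\Pi$-a.s.\ I would regard $\Pi$ as a $3$-plan on $\X\times\X\times\Z$ and disintegrate $\Pi=\int_\Z\Pi_z\,d\nu(z)$ with $\Pi_z\in\Prob(\X\times\X)$. Proposition~\ref{prop:cond_W} gives $W^2_{2,\nu}(Q^1,Q^2)=\int_\Z W_2^2(Q^1_{X|Z=z},Q^2_{X|Z=z})\,d\nu(z)$; substituting the identities \eqref{eq:QtoP}, namely $Q^1_{X|Z=(c_1,c_2)}=P_{X|C=c_1}$ and $Q^2_{X|Z=(c_1,c_2)}=P_{X|C=c_2}$, together with $\nu=(\varpi_2\#P)\otimes(\varpi_2\#P)$, produces the right-hand side of \eqref{eq:4W_condW}. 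Disintegration turns $\int\|x_1-x_2\|^2\,d\Pi$ into the left-hand side of \eqref{eq:4W_condW}, and this equals $W^2_{2,\nu}(Q^1,Q^2)$ by optimality of $\Pi$ in $\Gamma^4_\nu$; this gives \eqref{eq:4W_condW}, i.e.\ (ii). For (iii), compatibility of disintegrations yields $\varpi_1\#\Pi_z=Q^1_{X|Z=z}=P_{X|C=c_1}$ and $\varpi_2\#\Pi_z=P_{X|C=c_2}$ for $\nu$-a.e.\ $z=(c_1,c_2)$, so $\Pi_{c_1,c_2}\in\Gamma(P_{X|C=c_1},P_{X|C=c_2})$ and the integrand $\int\|x_1-x_2\|^2\,d\Pi_{c_1,c_2}-W_2^2(P_{X|C=c_1},P_{X|C=c_2})$ of \eqref{eq:4W_condW} is nonnegative pointwise; since its integral against $(\varpi_2\#P)\otimes(\varpi_2\#P)$ vanishes, it is zero a.e., which is exactly the claimed optimality of $\Pi_{c_1,c_2}$.

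The hard part will be (i), and within it the two-limit bookkeeping: fact (B) --- the $\beta\to\infty$ ``$\Gamma$-limit'' identifying penalized transport with conditional transport --- must be interlocked with the $N\to\infty$ empirical limit, which is why only a subsequence $N_k$ is produced and why the naive condition-wise empirical coupling fails to converge (cf.\ Example~9 in \cite{chemseddine2025conditional}). The supporting technical inputs --- the $W_2$ law of large numbers for empirical measures on a compact space, weak compactness of $\Prob((\X\times\Z)^2)$, and stability of optimal transport plans under weak convergence of marginals --- are standard but must be invoked with care, and are precisely the ingredients adapted from the conditional optimal transport analysis of \cite{chemseddine2025conditional}.
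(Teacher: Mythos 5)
Your proposal is correct and follows the same overall route as the paper: parts (ii) and (iii) are argued exactly as in the paper's proof (Proposition~\ref{prop:cond_W} plus the identities \eqref{eq:QtoP} and the product form of $\nu$ to rewrite both sides, then the pointwise inequality $\int\|x_1-x_2\|^2 d\Pi_{c_1,c_2}\geq W_2^2(P_{X|C=c_1},P_{X|C=c_2})$ forced to be an equality a.e.\ by the integral identity). The only substantive difference is in part (i): the paper disposes of it in one line by invoking Prop.~12 of \citet{chemseddine2025conditional} (after noting $W_2$-convergence of the empirical measures), whereas you reprove that proposition from scratch via your facts (A)--(C) and the diagonal extraction of $N_k$. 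Your self-contained argument is sound --- compactness of $\X\times\Z$ legitimizes the weak-compactness, continuity-of-cost, and stability steps, the penalty bound $\beta_k\int\|z_1-z_2\|^2 d\Pi_{N_k}^{\beta_k}\leq W_{2,\nu}^2(Q^1,Q^2)+1/k$ forces the limit into $\Gamma^4_\nu(Q^1,Q^2)$, and the closing chain of inequalities yields optimality --- so it buys independence from the cited result at the cost of length; the paper's version buys brevity by outsourcing exactly this step.
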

\begin{proof}
Let $\Z = \C\times \C$. Since $Q^1$ and $Q^2$ have the second moment, the empirical distributions $\widehat{Q}^1_N$ and $\widehat{Q}^2_N$ converge to $Q^1$ and $Q^2$, respectively, in $W_2(\X\times\Z)$.  
Then, from Prop.~12 of \citet{chemseddine2025conditional}, for each $k$, there is a subsequence $(N_k)$ such that $\Pi^{\beta_k}_{N_k}$ converges in $W_2$ to an optimal plan $\Pi\in \Gamma_\nu^4(Q^1,Q^2)$ for $W_{2,\nu}(Q^1,Q^2)$ as $k\to\infty$. This proves (i).   

For (ii), it follows from Proposition \ref{prop:cond_W} that 
\begin{equation}\label{eq:4W_condW_pf}
     \int_{(\X\times\Z)^2} \| (x_1,z_1)- (x_2,z_2)\|^2 d\Pi(x_1,z_1,x_2,z_2) 
 = \int_{\Z} W_{2}^2(Q^1_{X_1|Z=z},Q^2_{X_2|Z=z})d\nu(z).
\end{equation}
Since $z_1=z_2$ almost surely for $\Pi$, by writing $\Pi$ as an element in $\Gamma^3_\nu(Q^1,Q^2)$, the left hand side of \eqref{eq:4W_condW_pf} is 
\[
     \int_{\X\times\X\times\Z} \| x_1-x_2\|^2 d\Pi(x_1,x_2,z) 
 =  \int_\Z \int_{\X\times\X} \| x_1-x_2\|^2 d\Pi_z(x_1,x_2) d\nu(z).
\]
From \eqref{eq:QtoP} and the fact $\nu=(\varpi_2\#P)\otimes (\varpi_2\#P)$, the right hand side of \eqref{eq:4W_condW_pf} is equal to that of \eqref{eq:4W_condW}. 

Since it holds generally that 
\begin{equation} \label{eq:pairW2}
       \int_{\X\times\X} \| x_1-x_2\|^2 d\Pi_{c_1,c_2}(x_1,x_2) 
 \geq  W_{2}^2(P_{X|C=c_1},P_{X|C=c_2}),
\end{equation}
\eqref{eq:4W_condW} implies that the equality in \eqref{eq:pairW2} must hold for $(\varpi_2\#P)\otimes (\varpi_2\#P)$-almost every $c_1,c_2$. 
\end{proof}

\subsection{Theoritical support on the choice of $\beta$}\label{sec:beta-rate-appendix}}

In this subsection, we derive a necessary condition on the rate of $\beta$ that the convergence of Theorem \ref{thm:convergence} holds.  Noting that the dimension on the condition space $\cC\times\cC$ is $2d_c$, we see that the necessary condition below is $\beta = O(N^{1/d_c})$. The choice $\beta=N^{1/(2d_c)}$ used in our experiments satisfies this condition, although it may not be the maximal of the necessary condition. 

In the sequel, we discuss the general condition space $\cC$ of dimension $d$.  
Let $\cX\subset \RR^m$ and $\cC\subset \RR^d$, and $C$ be a random vector taking values in $\cC$ with distribution $P_C$.  Let $(X,C)\sim P$ and $(Y,C)\sim Q$ be random vectors taking values in $\cX\times \cC$, where $P,Q\in \cP_2(\cX\times\cC)$ have the same marginal distribution $P_C$, and $X$ and $Y$ are bounded: $\|X\|, \|Y\|\leq M$ almost surely. 

Suppose that we have i.i.d.~samples $(x_1,c_1),\ldots,(x_n,c_n)$ with distribution $P$ and $(y_1,c_1),\ldots,(y_n,c_n)$ with distribution $Q$ 
such that $x_i$ and $y_i$ are conditionally independent given $c_i$ for each $i$. 
Consider the following empirical conditional OT problem \cite{hosseini2025conditional,chemseddine2024conditional,kerrigan2024dynamic}:
\begin{equation}\label{eq:emp_condOT}
    \sigma_\beta^{(n)} = \arg\min_{\sigma\in\mathfrak{S}_n} F_n(\sigma), \quad F_n(\sigma):=\frac{1}{n}\sum_{i=1}^n \| x_i-y_{\sigma(i)} \|^2 + \beta \frac{1}{n}\sum_{i=1}^n \| c_i - c_{\sigma(i)}\|^2,     
\end{equation}
where $\beta>0$ is a constant. The optimal transport plan on $(\cX\times\cC)\times(\cX\times\cC)$ corresponding to  $\sigma_*=\sigma_\beta^{(n)}$ is denoted by $\alpha_\beta^{(n)}$: i.e., $\alpha_\beta^{(n)}:=\frac{1}{n}\sum_{i=1}^n \delta_{((x_i,c_i),(y_{\sigma_*(i)},c_{\sigma_*(i)}))}$. 

Proposition 12 in \cite{chemseddine2024conditional} shows that, for any increasing sequence $\beta_k\to\infty$ ($k\in \mathbb{N}$), there is a subsequence $(n_k)_{k=1}^\infty$ of $\mathbb{N}$ such that, as $k\to\infty$, $\alpha_{\beta_k}^{(n_k)}$ converges w.r.t~$W_2((\cX\times\cC)\times(\cX\times\cC))$ to the OT plan $\alpha_{o}\in\Gamma^4_C(P,Q)$ for the following conditional OT problem:
\begin{equation*}
    \min_{\alpha\in\Gamma^4_C(P,Q)} \int \| x - y\|^2 d\alpha.
\end{equation*}
Here, $\Gamma^4_C(P,Q)$ is the set of 4-plans, defined by 
\[
\Gamma^4_C(P,Q):=\{ \alpha\in \Gamma(P,Q) \mid \pi^{2,4}_\# \alpha = \Delta_\# P_C\},
\]
where $\Delta:\cC\to\cC\times\cC, c\mapsto (c,c)$.

We wish to prove $\beta_k = O( n_k^{2/d})$ as $k\to\infty$. 
For this purpose, we make the following assumptions about the probabilities.

\noindent
{\bf Assumption 1:} The probability $P_c$ has a density function $f$ with respect to the Lebesgue measure such that there is a constant $B_U>0$ that satisfies
\[
 f(c)\ \le\ B_U<\infty \qquad \text{for a.e.~}c\in \cC .
\]

\noindent
{\bf Assumption 2:} 
\begin{equation*}
    E_C\left[ \int \|x-y\|^2 dP_{X|C}(x)dQ_{Y|C}(y) \right]  \neq  E_C\left[ W_{2}(P_{X|C},Q_{Y|C})^2 \right].
\end{equation*}

The non-equality of Assumption 2 is not restrictive.  In fact, by the definition of $W_2$, it generally holds 
\begin{align*}
   E_C[W_2^2(P_{X|C},Q_{X|C})] & \leq E_C\left[ \int \|x-y\|^2 dP_{X|C}(x)dQ_{Y|C}(y) \right]. 
\end{align*}
The assumption requires that, for almost all $c$, the conditionally independent ditribution $P_{X|C}\otimes Q_{Y|C}$ does not give $W_2(P_{X|C},Q_{Y|C})$.  In general, the optimal plan with the cost $\|x-y\|^2$ is attained by independent distributions only if the supports of the distributions are orthogonal.  In particular, if one of the distributions has a density function with respect to the Lebesgue measure, this is not possible.

We have the following proposition.
\begin{proposition}
Let $\beta_k\to\infty$ be an increasing sequence, and $n_k$ be a subsequence of $\mathbb{N}$ such that the optimal plan $\alpha_{\beta_k}^{(n_k)}$ for \eqref{eq:emp_condOT} converges to the conditional OT plan $\alpha_o$ w.r.t.~$W_2(\cX\times\cC)\times(\cX\times\cC))$.  Under Assumptions 1 and 2, we have 
\begin{equation}\label{eq:beta_rate}
  \beta_k = O\left( n_k^{2/d} \right)  
\end{equation}
as $k\to\infty$. 

\end{proposition}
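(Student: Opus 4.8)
The plan is to argue by a dichotomy on the optimal permutation $\sigma_*:=\sigma_{\beta_k}^{(n_k)}$. Intuitively, a very large $\beta$ forces $\sigma_*$ to keep each $c_i$ matched to (nearly) itself, so the induced coupling of $X$ and $Y$ degenerates towards the conditionally independent coupling; but the hypothesised $W_2$-convergence of $\alpha_{\beta_k}^{(n_k)}$ to the conditional OT plan $\alpha_o$, together with Assumption 2, forces the limiting coupling to be \emph{strictly} cheaper than the independent one. Turning ``how far $\sigma_*$ must be from the identity'' and ``how much that costs in the $c$-term'' into quantitative statements is what yields the exponent $2/d$.

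First I would set $A_n(\sigma):=\frac1n\sum_i\|x_i-y_{\sigma(i)}\|^2$ and $B_n(\sigma):=\frac1n\sum_i\|c_i-c_{\sigma(i)}\|^2$, so $F_n(\sigma)=A_n(\sigma)+\beta B_n(\sigma)$ in \eqref{eq:emp_condOT}. Since $\cX\times\cC$ is compact, $\|x-y\|^2$ is bounded and continuous, so the assumed convergence $\alpha_{\beta_k}^{(n_k)}\to\alpha_o$ in $W_2$ gives $A_{n_k}(\sigma_*)=\int\|x-y\|^2 d\alpha_{\beta_k}^{(n_k)}\to\int\|x-y\|^2 d\alpha_o=:I_{\mathrm{OT}}$; disintegrating the $4$-plan $\alpha_o$ over the common value $C=c$ and applying Proposition~\ref{prop:cond_W} identifies $I_{\mathrm{OT}}=E_C[W_2^2(P_{X|C},Q_{Y|C})]$. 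On the other hand the triples $(x_i,y_i,c_i)$ are i.i.d.\ with $x_i,y_i$ conditionally independent given $c_i$, and $\|X\|,\|Y\|\le M$, so the strong law gives $A_n(\mathrm{id})=\frac1n\sum_i\|x_i-y_i\|^2\to I_{\mathrm{ind}}:=E_C\!\left[\int\|x-y\|^2 dP_{X|C}(x)\,dQ_{Y|C}(y)\right]$. Using the product coupling fibrewise, $I_{\mathrm{OT}}\le I_{\mathrm{ind}}$, and Assumption 2 upgrades this to a strict gap $I_{\mathrm{OT}}<I_{\mathrm{ind}}$.

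Next I would show that $\sigma_*$ must displace a fixed positive fraction of the indices, and that this costs $\Omega(n^{-2/d})$ in $B_n$. If $\sigma$ fixes all but $m$ indices then $A_n(\mathrm{id})-A_n(\sigma)\le(m/n)\cdot 4M^2$; since $A_{n_k}(\mathrm{id})-A_{n_k}(\sigma_*)\to I_{\mathrm{ind}}-I_{\mathrm{OT}}>0$, for all large $k$ the permutation $\sigma_*$ displaces at least $\lambda_0 n_k$ indices, where $\lambda_0:=(I_{\mathrm{ind}}-I_{\mathrm{OT}})/(8M^2)$. For the geometric step, note that whenever $\sigma$ displaces $\ge\lambda_0 n$ indices, $B_n(\sigma)$ is at least $\tfrac1n$ times the sum of the $\lambda_0 n$ smallest squared nearest-neighbour distances among $c_1,\dots,c_n$; by Assumption 1 the expected number of indices having a neighbour within radius $r$ is $O(n^2 r^d)$, so choosing $r_n\asymp n^{-1/d}$ so that this count is below $\tfrac12\lambda_0 n$, with high probability at most $\tfrac12\lambda_0 n$ indices have nearest-neighbour distance below $r_n$, whence that sum is $\ge\tfrac12\lambda_0 n\,r_n^2$ and so $B_{n_k}(\sigma_*)\ge c_2\,n_k^{-2/d}$ for a constant $c_2>0$. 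Finally, comparing $\sigma_*$ to the identity in \eqref{eq:emp_condOT} gives $\beta_k B_{n_k}(\sigma_*)\le A_{n_k}(\mathrm{id})$, and $A_n(\mathrm{id})\le 4M^2$, so
\[
\beta_k\,c_2\,n_k^{-2/d}\;\le\;\beta_k\,B_{n_k}(\sigma_*)\;\le\;A_{n_k}(\mathrm{id})\;\le\;4M^2,
\]
i.e.\ $\beta_k=O(n_k^{2/d})$.

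The step I expect to be hardest is the geometric lemma, precisely because $\sigma_*$ may adversarially choose to move exactly the most tightly clustered points; the resolution is that Assumption 1 (bounded density) caps the number of ``cheap to move'' points — only $O(n^2 r^d)$ sample points have any neighbour within $r$ — so even an adversarially chosen constant fraction of indices must include many points whose nearest neighbour is $\Omega(n^{-1/d})$ away. A secondary subtlety is whether the bound holds almost surely along $(n_k)$ or only with high probability: making the near-neighbour count estimate sharp enough for an almost-sure conclusion would require a concentration argument (e.g.\ a $U$-statistic or Poissonisation bound) in place of the Markov inequality used above, but the in-probability version follows directly from this outline.
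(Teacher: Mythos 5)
Your proof is correct, and it takes a genuinely different route from the paper's. The paper argues by contradiction: assuming $\beta_k n_k^{-2/d}\to\infty$, it invokes the all-pairs separation event $\min_{i\neq j}\|c_i-c_j\|>r_k$ via Proposition~\ref{prop:pairbound} to force the displaced fraction $m_k/n_k$ to vanish, so that the limiting plan degenerates to the conditionally independent coupling, contradicting Assumption~2. You run the contrapositive directly and quantitatively: Assumption~2 together with the assumed $W_2$-convergence forces a macroscopic displaced fraction $m_k/n_k\geq \lambda_0>0$, your geometric lemma prices $\Omega(n_k)$ of those displacements at $\Omega(n_k^{-2/d})$ each in the $c$-cost, and comparison with the identity permutation gives $\beta_k \leq C n_k^{2/d}$ outright. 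The substantive technical difference is the geometric ingredient: you bound the number of sample points having \emph{some} neighbour within $r\asymp n^{-1/d}$ (Markov's inequality applied to the expected pair count), which is exactly the scale at which a constant fraction of points still has no close neighbour, and this is what delivers the exponent $2/d$. The paper instead conditions on the minimum pairwise separation, which for a bounded density lives at the much smaller scale $n^{-2/d}$; pricing a displaced index by the \emph{squared} minimum separation would only yield $\beta_k=O(n_k^{4/d})$, and indeed the paper's displayed lower bound $\beta_k\frac{m_k}{n_k}r_k$ uses $r_k$ linearly where its event only justifies $r_k^2$. Your lemma is the one that correctly closes this at the claimed rate, so your route is not merely different but sharper. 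Finally, the caveat you raise about in-probability versus almost-sure control is moot: the concluding inequality $\beta_k c_2 n_k^{-2/d}\leq 4M^2$ contains no randomness, so it suffices that the event on which you derive it has positive probability for each large $k$; the probability $\tfrac12-o(1)$ you get from Markov's inequality is already enough, and no concentration argument is needed.
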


\begin{proof}
We will derive a contradiction assuming that \eqref{eq:beta_rate} does not hold.  In that case, there is a subsequence $(k')$ of $\mathbb{N}$ such that $\beta_{k'}  n_{k'}^{-2/d} \to \infty$ as $k'\to\infty$. W.l.o.g., we assume that the original sequence $\beta_k$ and $n_k$ satisfy
\begin{equation}\label{eq:beta_large}
    \beta_{k}  n_k^{-2/d} \to \infty \qquad (k\to\infty).
\end{equation}
Take $\gamma_k:= \bigl(\beta_{k}  n_k^{-2/d} \bigr)^{-1}$, which converges to 0 as $k\to\infty$, and define the radius $r_k>0$ by 
\begin{equation}
    r_k:=n_k^{-2/d} \gamma_k^{1/2} = \beta_k^{-1/2} n_k^{-1/d}, 
\end{equation}
for which  
\begin{equation}\label{eq:beta_r}
    \beta_k  r_k = \beta_k^{1/2} n_k^{-1/d} = \gamma_k^{-1/2} \to \infty \quad (k\to\infty)
\end{equation}
holds.  Also, we have
$n_k^2 r_k^{d} =  \gamma_k^{d/2}  \to 0$, 
which implies from Proposition \ref{prop:pairbound} that 
\begin{equation*}
\mathbb{P}\bigl( \min_{1\leq i\leq n_k} \min_{j\neq i} \| c_i - c_j \| \leq r_k \bigr) \to 0 \qquad (k\to\infty).
\end{equation*}
This means that the probability of the event 
\begin{equation*}
    E_k:= \bigl\{ \|c_i-c_j\|> r_k \text{ for all }1\leq i < j \leq n_k\bigr\}
\end{equation*}
tends to $1$ for $k\to\infty$.  Hereafter, we consider a random evnet on $E_k$.  

For simplicity, let us write $\sigma^*_k:=\sigma^{(n_k)}_{\beta_k}$, and define 
\[
J_k:= \{ 1\leq  i \leq n_k \mid \sigma^*_k(i) \neq i\}
\]
and $m_k: = |J_k|$. It follows that 
\[
 \beta_k \frac{1}{n_k} \sum_{i=1}^{n_k} \|c_i - c_{\sigma^*_k(i)} \|^2 
 \geq \beta_k \frac{1}{n_k} \sum_{i\in J_k} \|c_i - c_{\sigma^*_k(i)} \|^2  
 \geq \beta_k \frac{m_k}{n_k} r_k.
\]
On the other hand, since $\sigma^*_k$ minimizes $F_{n_k}$, we have 
\[
\beta_k \frac{1}{n_k} \sum_{i=1}^{n_k} \|c_i - c_{\sigma^*_k(i)} \|^2  \leq F_{n_k}(\sigma^*_k)
\leq F_{n_k}({\rm id}) 
= \frac{1}{n_k} \sum_{i=1}^{n_k} \|x_i - y_i \|^2  
= O_p(1).
\]
From the above two inequalities, we have $\beta_k \frac{m_k}{n_k}r_k = O_p(1)$, and thus it holds from \eqref{eq:beta_r} that 
\begin{equation}\label{eq:m_n}
    \frac{m_k}{n_k} = O_p((\beta_k r_k)^{-1})) = o_p(1) \quad (k\to\infty).
\end{equation}

Next, the first term of $F_{n_k}(\sigma^*_k)$  is 
\begin{align}\label{eq:emp_lim}
 &   \int \|x-y\|^2d\alpha_{\beta_k}^{(n_k)}  = \frac{1}{n_k} \sum_{i\notin J_k}\|x_i - y_{\sigma_k^*(i)}\|^2 + \frac{1}{n_k} \sum_{i\in J_k}\|x_i - y_{\sigma_k^*(i)}\|^2 \nonumber \\
    & \quad = \frac{1}{n_k}\sum_{i=1}^{n_k}\|x_i - y_i\|^2 - \frac{1}{n_k}\sum_{i\in J_k}\|x_i - y_i\|^2 + \frac{1}{n_k} \sum_{i\in J_k}\|x_i - y_{\sigma_k^*(i)}\|^2.
\end{align} 
The second and third terms in the last line are upper bounded by 
\[
\frac{1}{n_k}\sum_{i\in J_k}4M^2,
\]
which converges to zero in probability as $m_k/n_k = o_p(1)$.  As a result, the first term in the last line converges to $E\|X-Y\|^2$ with $X\indep Y|C$ in probability. 
Because the left hand side of \eqref{eq:emp_lim} converges to $\int \|x-y\|^2 d\alpha_o$, it implies  
\[
E\|X-Y\|^2 = E_c\bigl[ W_2^2(P_{X|C},Q_{X|C}) \bigr],
\]
which contradicts Assumption 2. 
\end{proof}

\begin{proposition} \label{prop:pairbound}
Let $X_1,\dots,X_n$ be i.i.d.\ $\mathbb{R}^d$-valued random variables with density $f$
on $S\subset\mathbb{R}^d$. Assume that
\[
 f(x) \;\le\; B_U < \infty \qquad \text{for all } x\in S.
\]
For $r>0$, let $v_d := \mathrm{Vol}\big(B(0,1)\big)$ be the volume of the unit ball in $\mathbb{R}^d$. Then
\[
\mathbb{P}\!\left(\min_{1\le i\le n}\min_{\substack{1\le j\le n\\ j\neq i}} 
\|X_i-X_j\|\le r\right)
\;\le\;
\min\!\left\{\,1,\ \binom{n}{2}\, B_U\, v_d\, r^{d}\right\}.
\]
\end{proposition}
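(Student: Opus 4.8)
The plan is to apply a union bound over all $\binom{n}{2}$ pairs of indices, thereby reducing the claim to a bound on the probability that two fixed sample points lie within distance $r$ of one another.

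First I would rewrite the event of interest as a union over pairs. Setting $A_{ij} := \{\|X_i - X_j\| \le r\}$, we have $\{\min_{1\le i\le n}\min_{j\neq i}\|X_i - X_j\| \le r\} = \bigcup_{1\le i<j\le n} A_{ij}$. By countable subadditivity of probability together with the fact that the $X_i$ are identically distributed, $\mathbb{P}\bigl(\bigcup_{i<j}A_{ij}\bigr) \le \sum_{1\le i<j\le n}\mathbb{P}(A_{ij}) = \binom{n}{2}\,\mathbb{P}(\|X_1 - X_2\|\le r)$.

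Next I would bound the single-pair probability $\mathbb{P}(\|X_1-X_2\|\le r)$. Using the independence of $X_1$ and $X_2$ and Fubini's theorem, I condition on $X_2 = y$: the conditional probability equals $\int_{B(y,r)} f(x)\,dx$, and since $f \le B_U$ pointwise this is at most $B_U\,\mathrm{Vol}(B(y,r)) = B_U\,v_d\,r^d$, where the last equality uses translation invariance and scaling of Lebesgue measure. This bound is uniform in $y$, so integrating against the law of $X_2$ gives $\mathbb{P}(\|X_1 - X_2\|\le r) \le B_U v_d r^d$ with no assumption whatsoever on the marginal of $X_2$ beyond what is given. Combining with the union bound yields $\mathbb{P}(\cdots) \le \binom{n}{2} B_U v_d r^d$, and since every probability is at most $1$, the stated minimum follows. (Equivalently, one may avoid conditioning and write $\mathbb{P}(\|X_1-X_2\|\le r) = \int\!\!\int \mathbf{1}\{\|x-y\|\le r\}\, f(x)f(y)\,dx\,dy$, bounding the inner $x$-integral first.)

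There is no substantive obstacle here: the argument is a routine union bound. The only points requiring a little care are (i) invoking the density bound $f\le B_U$ to control the \emph{Lebesgue volume} of a ball of radius $r$, rather than trying to control probability mass directly, and (ii) noting that the conditional estimate is uniform in the conditioning point, which is what makes the final step valid for an arbitrary joint law of the sample (in particular it does not even require the points to have a density — only one of them does, which is all Assumption~1 supplies).
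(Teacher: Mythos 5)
Your proof is correct: the union bound over the $\binom{n}{2}$ pairs combined with the conditional estimate $\mathbb{P}(\|X_1-X_2\|\le r \mid X_2=y) = \int_{B(y,r)} f \le B_U v_d r^d$, uniform in $y$, gives exactly the stated bound. The paper omits this proof as ``standard,'' and your argument is precisely the standard one intended, with the details (independence, uniformity in the conditioning point, and the trivial cap at $1$) correctly handled.
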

\begin{proof}
The proof is standard and we omit it. 
\end{proof}

\section{Experimental Details}\label{sec:experiments-appendix}
All of the model training was done using internal NVIDIA V100 GPU cluster.

\subsection{Synthetic datasets}
In the synthetic experiments shown in Fig.~\ref{fig:synthetic}~(a), we used a dataset sampled from a Gaussian mixture:
\begin{align}
\begin{split}
    \frac{1}{6}(\mathcal{N}(x|[0,-1],0.01^2)+\mathcal{N}(x|[0,-3],0.05^2)+\mathcal{N}(x|[\sqrt{3}/2,1],0.01^2)+\mathcal{N}(x|[3\sqrt{3}/2,3],0.05^2)\\+\mathcal{N}(x|[-\sqrt{3}/2,1],0.01^2)+\mathcal{N}(x|[-3\sqrt{3}/2,3],0.05^2)).
\end{split}
\end{align}

In the synthetic experiments shown in Fig.~\ref{fig:synthetic}~(b), we used the dataset consisting of $10^7$ samples and used a version of MLP with residual connection except that, instead of the layer of form $x \to x + \phi(x)$, we used the layer that outputs the concatenation $x \to [x, \phi(x)]$, incrementally increasing the intermediate dimensions until the final output. 
We used this architecture with width=128, depth=8 for all of the methods. 
For \methodabb{}, we used $\beta=10$, and we selected this parameter by searching $\beta=0.01,0.1,1,10,100$ (see Appendix~\ref{sec:app_additional} for details). The batch size for all the methods were $1\times10^3$. In SI, we 
discretized the conditional space into 5 equally divided partitions. Also in partial diffusion, we used the classifier free guidance method with weight 0.3, used timesteps of $T=1000$ and reversed the diffusion process for $300$ steps. For non-grouped data, the model was trained on one NVIDIA V100 GPU. In both grouped and non-grouped data experiments, we calculated the metric $\EE_{x_1 \sim P_{c_1}, c_1,c_2}[\|T_{c_1 \to c_2} (x_1)) -T_{c_1\to c_2}^{\rm OT} (x_1))\|^2]$ empirically using 100 i.i.d. samples from the uniform distributions on the support of $c_1,c_2$. For error analysis, we ran this evaluation 10 times and reported the mean and standard deviations of the multiple runs in Table~\ref{tab:synthetic}.

\subsection{QED experiment} \label{sec:mol-appendix}
For the task shown in Section~\ref{sec:real-world-experiments}, we trained \methodabb{} on a 500K subsampled ZINC22 dataset, and adopted a grid search algorithm similar to~\citep{Kaufman2024} (see the pseudocode in Algorithm~\ref{alg:gridsearch1}) to evaluate the methods to be compared. We used four NVIDIA V100 GPUs for training.
As mentioned in the main manuscript, this search is done for each \textit{initial molecule} by making multiple attempts(Oracle Calls) of the transfer, and the initial molecule is marked \textit{success} if the method of interest succeeds in discovering the molecule satisfying both the Tanimoto similarity and condition range requirement.
The initial molecule is marked \textit{fail} if the method fails to find such a transferred molecule within fixed number of oracle calls. 
We denote the set of initial molecules by $M_0$ in the pseudocode of \ref{alg:gridsearch1}, and we chose this set to be the same set used in ~\citep{Kaufman2024}.
In the grid search approach, we adopted a boosting strategy, which is to scale the estimated velocity field $v$ with a certain parameter $b$ and calculate the ODE using $v_{\rm boosted} = b\cdot v$. Using this strategy, we enabled our flow-based model to generate various molecules similar to the diffusion models with large guidance weights. For the grid search configurations $(B,C,N)$ in Algorithm~\ref{alg:gridsearch1}, we used $B=\operatorname{linspace}(0.8,2.5,20), N=\operatorname{linspace}(10^{-3},3,20), C=\operatorname{linspace}(0.84,0.95,10)$ when 
$\mathrm{MAX\_ORACLE\_CALLS}=50000$. 
For $\mathrm{MAX\_ORACLE\_CALLS}\in \{25,100,500,1000\}$, we used 
$B=[1,2,3,4,5], N=[0], C=[0.9,0.91,0.92,0.93,0.94]$.  We give samples of generated molecules in Fig.~\ref{fig:qed}. We summarize the setting of the QED experiment in Table~\ref{tab:qed-config}.
\begin{table}
    \centering
    \begin{tabular}{c|c}
        Property& Configuration\\
        \hline\hline
         Data space $\mathcal{X}$& Latent space of~\cite{kaufman2024coati} ($\mathbb{R}^{512}$) \\
         Condition space $\mathcal{C}$& QED values embedded in $\mathbb{R}^{32}$ same as~\cite{Kaufman2024}\\
         Training dataset & 500K subset of ZINC22\\
         Initial points $x_1$ at evaluation& 800 reference molecules provided in~\cite{Jin2019-gh}\\
         Initial conditions $c_1$ at evaluation& Embedded QED of $x_1$\\
         Target conditions $c_2$ at evaluation &\begin{tabular}{c} $\operatorname{linspace}(0.84,0.95,10)$ (maximum oracle calls $=50000$) \\ $[0.9,0.91,0.92,0.93,0.94]$ (otherwise)\end{tabular}\\
    \end{tabular}
    \caption{Training and evaluation configuration for the QED experiment}
    \label{tab:qed-config}
\end{table}

\begin{algorithm}[t]
        \caption{Grid search for QED conditioned molecular transfer}
        \label{alg:gridsearch1}
\begin{algorithmic}[1]
\renewcommand{\algorithmicrequire}{\textbf{Input:}}
 \REQUIRE 
 \begin{itemize}[topsep=-4pt, partopsep=0pt, itemsep=2pt, parsep=0pt, leftmargin=11pt]
    \item Set of initial molecules $M_0=\{(x,c_{\rm QED})\}$
    \item Boosting weights $B$
    \item Noise intensities $N$
    \item Target conditions $C$
    \item Trained velocity field $v$
    \item Number of maximum oracle calls $\mathrm{MAX\_ORACLE\_CALLS}$.
\end{itemize}
\item[] \texttt{\color{teal}\# Function for grid searching}
\FUNCTION{GRID\_SEARCH($x_0,c_0$)}
  \STATE $n\gets 1$
  \WHILE{$n\leq \mathrm{MAX\_ORACLE\_CALLS}$}
  \FOR {$b \in B$}
  \FOR{$c_1\in C$}
  \FOR{$\varepsilon\in N$}
  \STATE sample $z\sim\mathcal{N}(0,1),\;\hat{x}_0\gets x_0 + \varepsilon z$
  \STATE $x_1\gets\operatorname{ODESolver}(b\cdot v(\cdot|c_0,c_1),\hat{x}_0)$
  \STATE $n\gets n+1$
  \IF{$x_1$ is not decodable}
  \STATE \textbf{continue}
  \ENDIF
  \IF{$\operatorname{Tanimoto\_similarity}(x_0,x_1) \geq 0.4$ \AND$\operatorname{QED}(x_1) \geq 0.9$ }
 \STATE \textbf{Return:} SUCCESS
  \ENDIF
  \ENDFOR
  \ENDFOR
  \ENDFOR
  \ENDWHILE
 \STATE \textbf{Return:} FAIL
  \ENDFUNCTION
\item[] \texttt{\color{teal}\# Loop for all initial points}
 \STATE $\mathrm{num\_success}\gets0$
 \FOR{$(x_0,c_0)\in M_0$}
    \IF{GRID\_SEARCH($x_0,c_0$) $=$ SUCCESS}
    \STATE $\mathrm{num\_success}\gets \mathrm{num\_success}+1$
    \ENDIF
  \ENDFOR
  \STATE \textbf{Return:} $\mathrm{num\_success}/|M_0|$
\end{algorithmic}
\end{algorithm}

In modeling the vector fields $v$, we used a formulation inspired from~\citet{isobe2024extended} in order to reduce the computational cost given by 
\begin{align}
\begin{split}
&v(\psi_{c_1, c_2}(t),  t | c_1, c_2):= \bar{v}(\psi_{c_1, c_2}(t),c(t)   | c_2) - \bar{v}(\psi_{c_1, c_2}(t),  c(t)| c_1),\\
&c(t):=c_2*t+c_1*(1-t),
\end{split}
\end{align}
and used the UNet architecture proposed in~\citet{Kaufman2024} in which we replaced the convolution with dense layers. The network parameters were the same as the ones used in the QED nearby sampling benchmark of \citet{Kaufman2024}. This formulation ensures that no transport will be conducted when $c_2 = c_1$ and avoids learning trivial paths. 
In the training procedure, we first normalized the condition space with the empirical cumulative density functions so that the empirical condition values would become uniformly distributed, and then embedded them using the TimeEmbedding layer to obtain its $32$ dimensional representation; this is the same treatment done in \citet{Kaufman2024}.  We used the latent representation provided in~\citet{Kaufman2024} and trained \methodabb{} on a 500K subset of ZINC22 with batch size=1024, $\beta = (\mathrm{ batch\_size})^{1/2d_c}=(1.2419)^{1/2}$, where $d_c=32$ is the dimension of the conditional space $\mathcal{C}$.
For this experiment, we found the choice of $\beta = (\mathrm{ batch\_size})^{1/2d_c}$ to perform well, which is the reciprocal to the speed of the convergence of a pair of empirical distributions \citep{Empirical}.

\subsection{Logp\&TPSA experiment}\label{sec:logp-tpsa-appendix}
In the evaluation of methods on LogP-TPSA benchmark (Section~\ref{sec:real-world-experiments}) we used the sampling efficiency curve plotted between (1) normalized discrepancy of LogP and TPSA between generated molecules and the target condition $c_{\rm err} \in [0, 1]$ and (2) Tanimoto similarity of Morgan fingerprints between the initial molecule and the generated molecule $s_{\rm Tani} \in [0, 1]$. Since the decoder of~\citet{Kaufman2024} does not always succeed in mapping the latent expression to a valid molecule, we made 10 attempts for each target condition by adding different perturbations to the initial latent vector. 
Detailed procedure for calculating these two metrics for each initial molecule is in Algorithm~\ref{alg:gridsearch2}. In our experiments, we used a 3.7M subset from the ZINC22 dataset and created a validation split of initial molecules  containing 1024 molecules. 
We used four NVIDIA V100 GPUs for training. For $\beta$ in \methodabb{}, we used $\beta=(1.25)^{1/2}$. For Multimarginal SI, we created 10 clusters in the $\mathcal{C}$ space using the $K$-means algorithm and treated the class labels as discrete conditions. For OT-CFM in this evaluation, we used the full training dataset as $P_{\source}$ and trained a flow to conditional distributions using the regular OT-CFM framework for conditional generations. Other configurations, including batch size and model architecture, were common among competitive methods and were the same as the QED benchmark.  We illustrate samples of generated molecules in Figures \ref{fig:MOL1} to \ref{fig:MOL5}. We summarize the training and evaluation configurations in Table~\ref{tab:logp-tpsa-config}.

The sampling efficiency curve in Fig.~\ref{fig:roc-curve} is described mathematically as, 
\begin{align}
    y &= G(F^{-1}(x)),\\
     F(x) &:= \operatorname{Pr}[s_{\rm Tani}\leq x],\\
    G(x) &:= \operatorname{Pr}[c_{\rm err}\leq  x],
\end{align}
where $c_{\rm loss},s_{\rm Tani}$ is the output of Algorithm~\ref{alg:gridsearch2}, $\operatorname{Pr}[A\leq x]$ is the ratio of members $a\in A$ that satisfies $a\leq x$, and $c_{\rm err} := c_{\rm loss}/c_{\rm max}$ is the normalized $c_{\rm loss}$ by the normalization factor $c_{\rm max}=7$ so that $c_{err} \in [0, 1].$ 
This way, the curve describes how much one has to trade off the tanimoto similarity threshold (similar to the size of the neighborhood) in order to sample the molecules with $c_{err}$ under a certain error threshold.
\begin{table}
    \centering
    \begin{tabular}{c|c}
        Property& Configuration\\
        \hline\hline
         Data space $\mathcal{X}$& Latent space of~\cite{kaufman2024coati} ($\mathbb{R}^{512}$) \\
         Condition space $\mathcal{C}$& LogP and TPSA values embedded in $\mathbb{R}^{32\times 2}$\\
         Training dataset & 3.7M subset of ZINC22\\
         Initial points $x_1$ at evaluation& Random samples from ZINC22\\
         Initial conditions $c_1$ at evaluation& Embedded LogP and TPSA of $x_1$\\
         Target conditions $c_2$ at evaluation &$\operatorname{meshgrid}([0,1,2,3,4],[10,45,80,115,150])$\\
    \end{tabular}
    \caption{Training and evaluation configuration for the LogP\&TPSA experiment}
    \label{tab:logp-tpsa-config}
\end{table}

\begin{algorithm}[t]
        \caption{Grid search for LogP-TPSA conditioned all-to-all molecular transfer}
        \label{alg:gridsearch2}
\begin{algorithmic}[1]
\renewcommand{\algorithmicrequire}{\textbf{Input:}}
 \REQUIRE 
 \begin{itemize}[topsep=-4pt, partopsep=0pt, itemsep=2pt, parsep=0pt, leftmargin=11pt]
    \item Set of initial molecules $M_0=\{(x,(c_{\rm LogP},c_{\rm TPSA}))\}$
    \item Noise intensity $\varepsilon=0.1$
    \item Target conditions $C=\operatorname{meshgrid}([0,1,2,3,4],[10,45,80,115,150])$
    \item Trained velocity field $v$
    \item Transfer algorithm $\operatorname{Trans}(v,x_0,c_0,c_1)$
    \item Condition calculation function $\operatorname{Cond}(\cdot):\mathcal{X}\to\mathcal{C}$
\end{itemize}
\item[] \texttt{\color{teal}\# Function for transferring with several attempts }
\FUNCTION{TRANSFER($x_0,c_0,c_1$)}
  \STATE $\mathrm{c\_loss}\gets [0,\ldots,0],\: \mathrm{similarity}\gets[0,\ldots,0]$
  \FOR{$n\in \{1,\ldots,10\}$}
  \STATE sample $z\sim\mathcal{N}(0,1),\;\hat{x}_0\gets x_0 + \varepsilon z$
  \STATE $x_1\gets\operatorname{Trans}(v,x_0,c_0,c_1)$
  \IF{$x_1$ is not decodable \OR $\operatorname{Tanimoto\_similarity}(x_0,x_1)=1$}
  \STATE \textbf{continue}
  \ENDIF
  \STATE  $\mathrm{c\_loss}[n]\gets \operatorname{MAE}(c_1,\operatorname{Cond}(x_1))$
  \STATE $\mathrm{similarity}[n]\gets \operatorname{Tanimoto\_similarity}(x_0,x_1)$
  \ENDFOR
  \STATE \textbf{Return:} ($\min(\mathrm{c\_loss}), \mathrm{similarity}[\operatorname{argmin}(\mathrm{c\_loss})]$)
  \ENDFUNCTION
\item[] \texttt{\color{teal}\# Function to calculate the mean of all-to-all transfer }
 \FUNCTION{GETMEAN$(x_0,c_0)$}
 \STATE $\mathrm{c\_loss\_mean}\gets 0,\: \mathrm{similarity\_mean}\gets 0$
 \FOR{$c_1\in C$}
    \STATE  $\mathrm{c\_loss}^{\prime},\: \mathrm{similarity}^{\prime}\gets $ TRANSFER($x_0,c_0,c_1$)
    \STATE $\mathrm{c\_loss\_mean}\gets\mathrm{c\_loss\_mean} + \mathrm{c\_loss}^{\prime},\: \mathrm{similarity\_mean}\gets  \mathrm{similarity\_mean} + \mathrm{similarity}^{\prime} $
  \ENDFOR
  \STATE \textbf{Return:}  $\mathrm{c\_loss\_mean}/|C|,\: \mathrm{similarity\_mean}/|C|$
\ENDFUNCTION
\item[] \texttt{\color{teal}\# Loop for all initial points}
\STATE $c_{\rm loss},\: s_{\rm Tani}\gets[0,\ldots,0]$
\FOR{$i \in \{1,\ldots,|M_0|\}$}
  \STATE $x_0,c_0 = M_0[i]$
  \STATE $c_{\rm loss}[i],\: s_{\rm Tani}[i]\gets$ GETMEAN$(x_0,c_0)$
\ENDFOR
\STATE \textbf{Return:} $c_{\rm loss},\: s_{\rm Tani}$
\end{algorithmic}
\end{algorithm}

\section{Additional experiments} \label{sec:app_additional}
\textbf{The robustness of \methodabb{} on different $\beta$:}
To further analyze the robustness of \methodabb{} against the hyperparameter $\beta$ in non-grouped settings, we trained our model with $\beta = 0.01,0.1,1,5,10,20,50,100$ and calculated the MSE from the ground-truth pairwise OT using the same procedure as Table \ref{tab:synthetic}~(b).
The result of this experiment is illustrated in Fig.~\ref{fig:add-beta}, \ref{fig:add-beta-samples}. From this experiment, we found out that the hyperparameter $\beta$ is at least robust over a range of one order of magnitude ($\beta \in [1,10]$). The selection of $\beta$ used in the experiments ($\beta = (\mathrm{batch\_size})^{1/2d_c}$) is within an order of magnitude of this range since we used $\mathrm{batch\_size} = 10^3$, $d_c = 1$ in the non-grouped synthetic data experiment.

\begin{figure}
    \centering
    \includegraphics[width=0.6\linewidth]{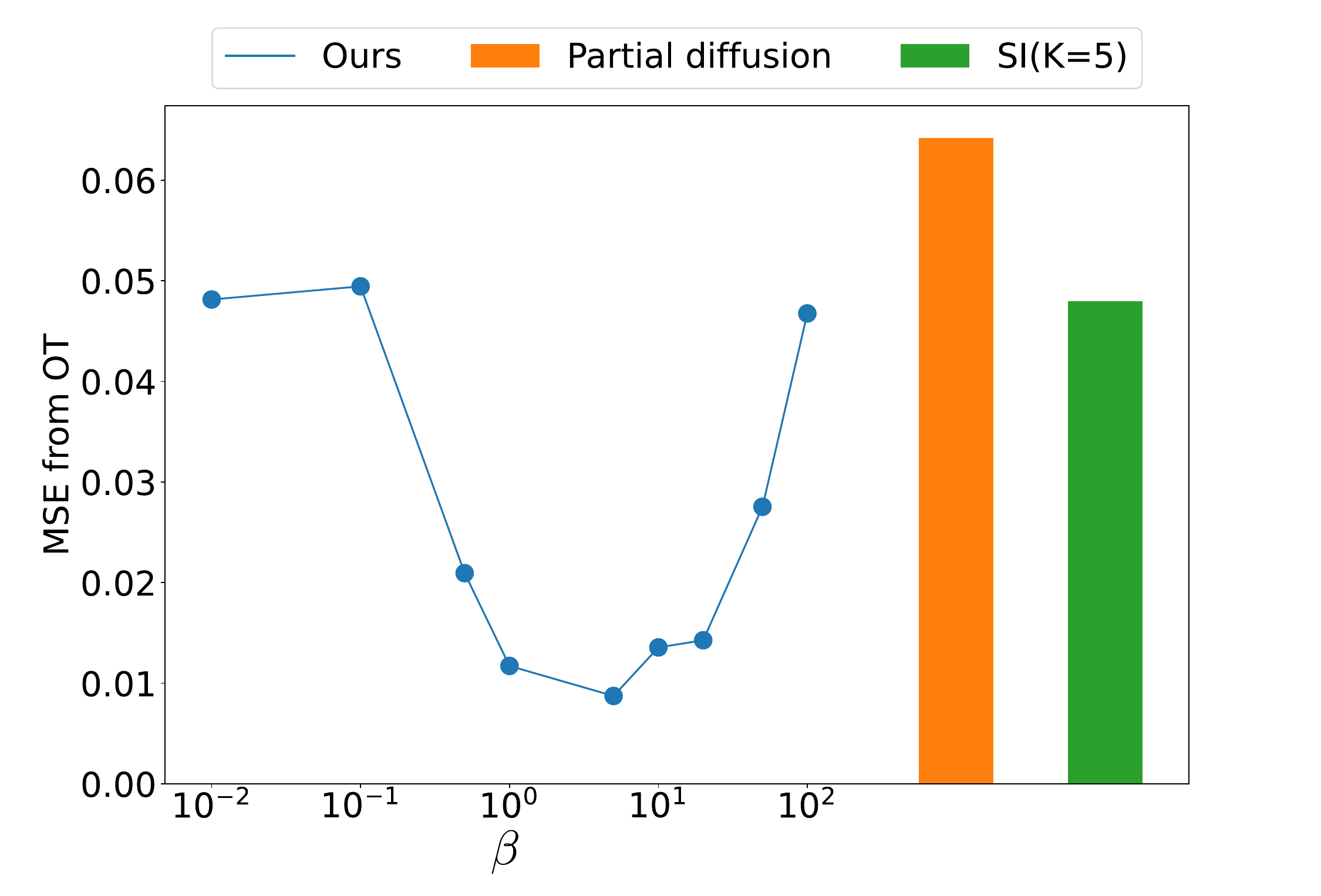}
    \caption{Quantitative results from the additional experiment on the robustness of hyperparameter $\beta$. The results of partial diffusion and multimariginal SI (SI) in the graph are taken from Table \ref{tab:synthetic}~(b).}
    \label{fig:add-beta}
\end{figure}
\begin{figure}
    \centering
    \includegraphics[width=\linewidth]{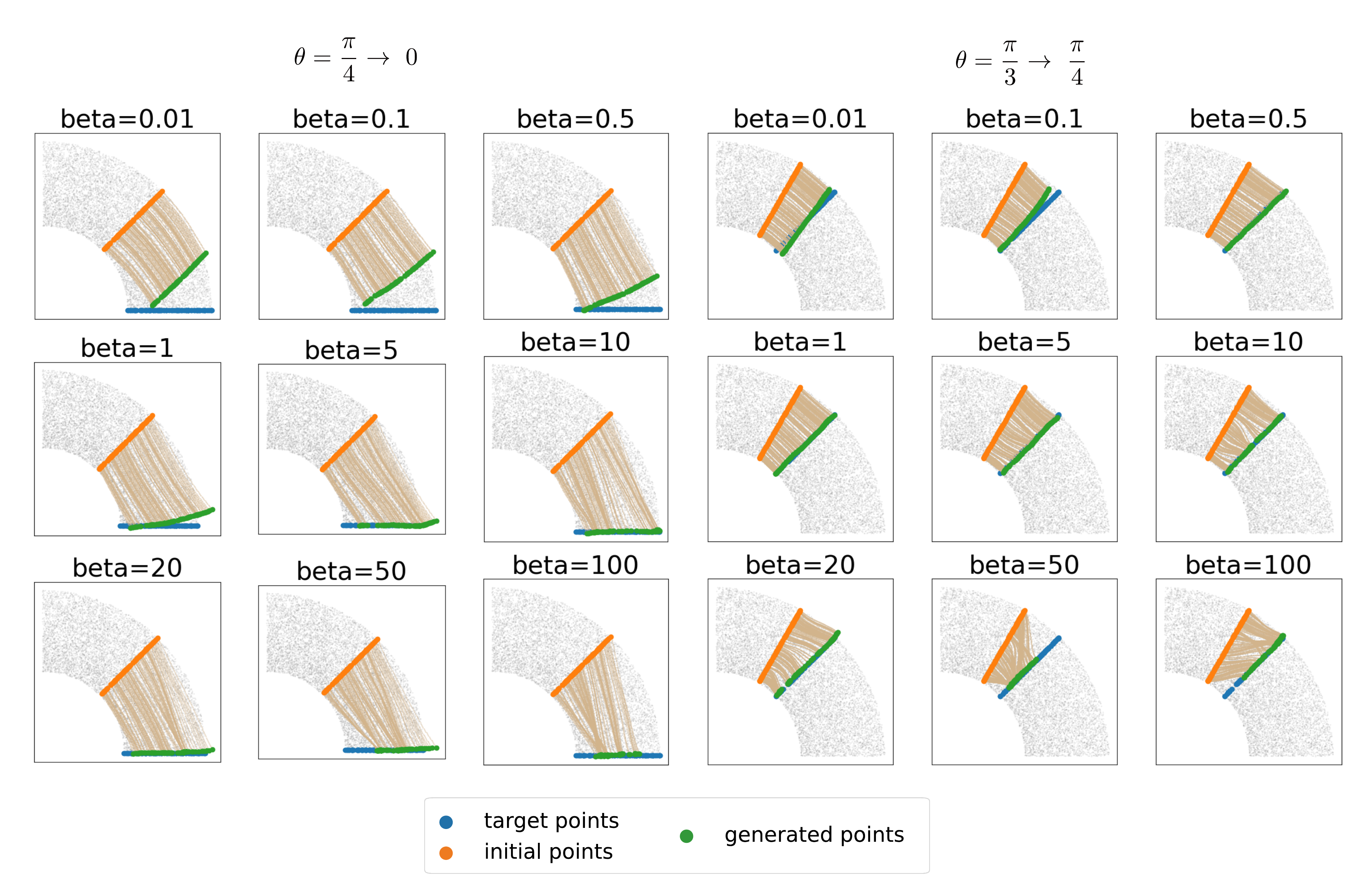}
    \caption{Samples produced in the additional experiment on the robustness of hyperparameter $\beta$.}
    \label{fig:add-beta-samples}
\end{figure}

\textbf{Computational complexity of Multimarginal SI and \methodabb{}:}
In the synthetic experimental setup depicted in Fig.~\ref{fig:synthetic}~(b), one might claim that methods that are designed for \nonregressdata, such as  Multimarginal SI, could be applied to the setting of general $(x,c)$ datasets by \textit{binning} the dataset. 
However, this adaptation of methods like Multimarginal SI is challenging in practice. 
Firstly, as illustrated in Fig.~\ref{fig:si-computational}, when a small number of bins are used, Multimarginal SI suffers from low transfer accuracy because the partitioning of $\C$ is just simply too coarse.
On the other hand, increasing the number of bins not only leads to a substantial increase in computational cost compared to \methodabb{}, but also results in a diminished number of data samples per bin, consequently degrading its accuracy of OT estimation (see also Table \ref{tab:si-computational}) and Fig.~\ref{fig:si-computational}.
\begin{figure}
    \centering
    \includegraphics[width=0.8\linewidth]{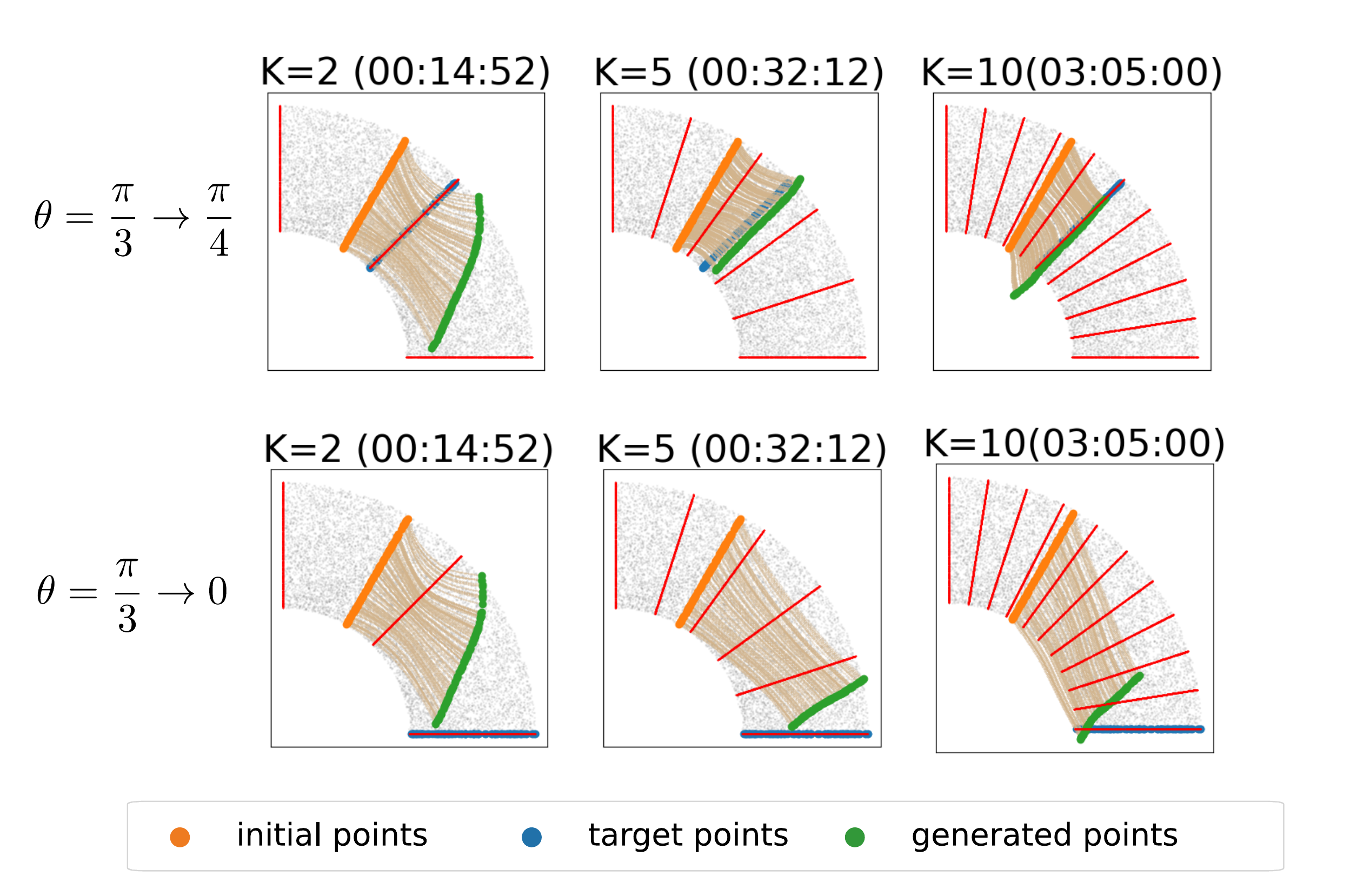}
    \caption{Samples from the additional experiment of multimarginal stochastic interpolants (SI) with  different number of bins $K$. The red lines show the boundary of bins. The subtitle of each figure reads as [$K=\#(\mathrm{bins}) (\mathrm{training time})$]. This experiment was done using the same hardware with 1 NVIDIA V100 GPU for all models.}
    \label{fig:si-computational}
\end{figure}
\begin{table}
    \centering
\begin{tabular}{c|cc}
     Method& MSE from OT& Training time (min)\\
     \hline\hline
     SI (K=2)& $1.25\times 10^{-1}$& 14.8\\
     SI (K=5)& $4.90\times 10^{-2}$& 32.2\\
     SI (K=10)& $2.75\times 10^{-2}$ &185.0\\
    \hline
    parital diffusion & $6.77\times 10^{-2}$ & 18.1\\
     A2A-FM (Ours)& $1.51\times 10^{-2}$ &29.1
\end{tabular}
    \caption{Quantitative results from the additional experiment regarding the change in the number of bins $K$ in Multimarginal SI. Although the performance measured by the MSE from OT improves as $K$ increases, the training time will increase in exchange. Results of partial diffusion and \methodabb{} is also reported for comparison. The training of the shown models were done using the same hardware with 1 NVIDIA V100 GPU.}
    \label{tab:si-computational} 
\end{table}

\textbf{CelebA-Dialog HQ 256 Dataset:}
To demonstrate the applicability of \methodabb{} to high-dimensional grouped data, we trained \methodabb{} on the $256\times 256$ downscaled version of CelebA-Dialog HQ dataset~\citep{jiang2021talkedit} which contains 200K high-quality facial images with 6-level annotations of attributes: Bangs, Eyeglasses, Beard, Smiling, Age. In this experiment we used only Beard and Smiling labels and treated these two labels as two-dimensional condition in $[0:5]^2$. We used the latent expressions of~\citet{rombach2021highresolution} and trained a vector field using the UNet architecture with a slight modification for multidimensional conditional inputs. This was done so that we can adopt the independent condition embedding layers to each dimensions of the condition space and to add them to the time embedding as the regular condition embeddings. We display examples of all-to-all transfer results of \methodabb{} in Figures~\ref{fig:celebahq1}, \ref{fig:celebahq2}, \ref{fig:celebahq3}. 

The results of this experiment support our claim that \methodabb{} is scalable since the final dimentionality of the latent space was $64\times 64 \times 3 = 12,288$. We shall note that many text-labeled image datasets are technically \nonregressdata because  computer vision datasets oftentimes contain sufficient data for each independent textual condition, such as 'dog' and 'brown'.   
In such a case, a classifier for each category may be trained, and condition transfer methods with classifiers like \citet{zhao2022egsde, preechakul2022diffusion} may become a viable option in practice.
We stress that \methodabb{} is also designed to be able to handle cases in which such good classifiers/regressors cannot be trained. 
For example, on the aforementioned dataset like ZINC22 with continuous condition, it is difficult to train a high-performance regressor/classifier that can be used for classifier-dependent methods. 
For example, in \citet{kaufman2024coati}, classifier guidance method for conditional generation does neither perform as well as classifier \textit{free} guidance methods nor its flow-matching counterpart.

\begin{figure}
    \centering
    \includegraphics[width=\linewidth]{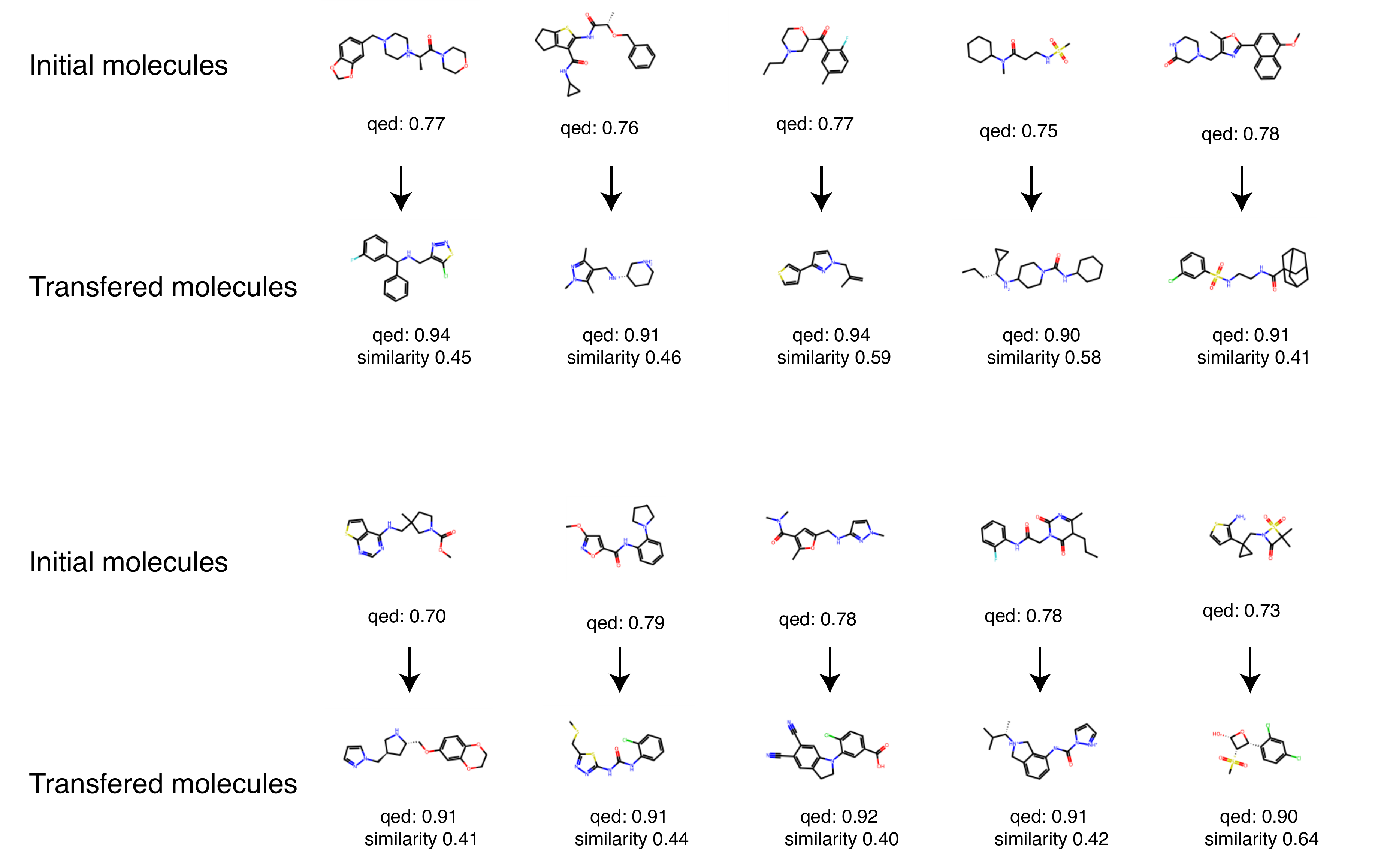}
    \caption{Samples of the nearby sampling in Section~\ref{sec:real-world-experiments}. The first and third row is the initial samples and the second and fourth row are the successful sampled molecules.}
    \label{fig:qed}
\end{figure}
\begin{figure}
    \centering
    \includegraphics[width=\linewidth]{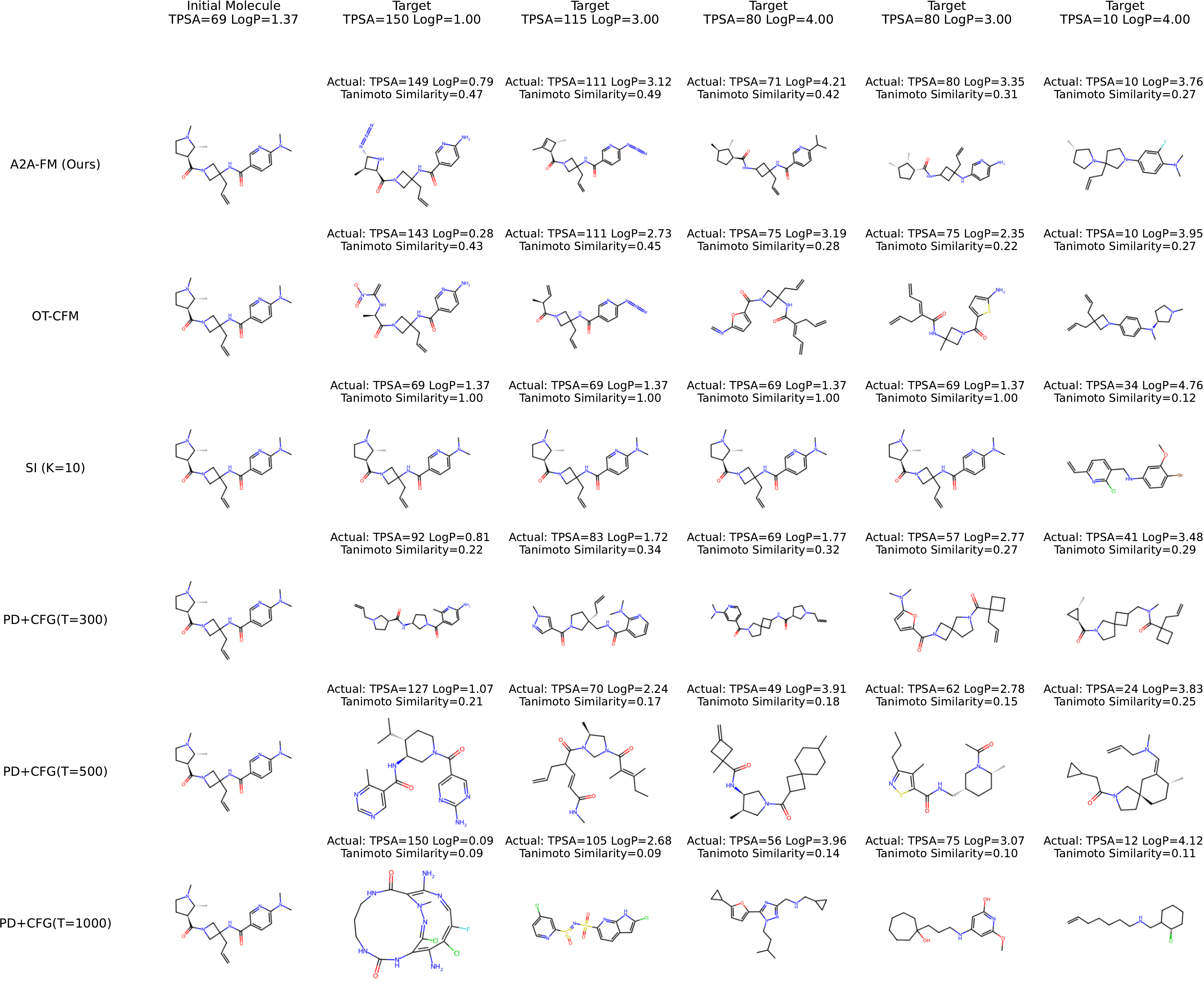}
    \caption{All-to-all transfer examples of experiment in Section~\ref{sec:real-world-experiments}.}
    \label{fig:MOL1}
\end{figure}
\begin{figure}
    \centering
    \includegraphics[width=\linewidth]{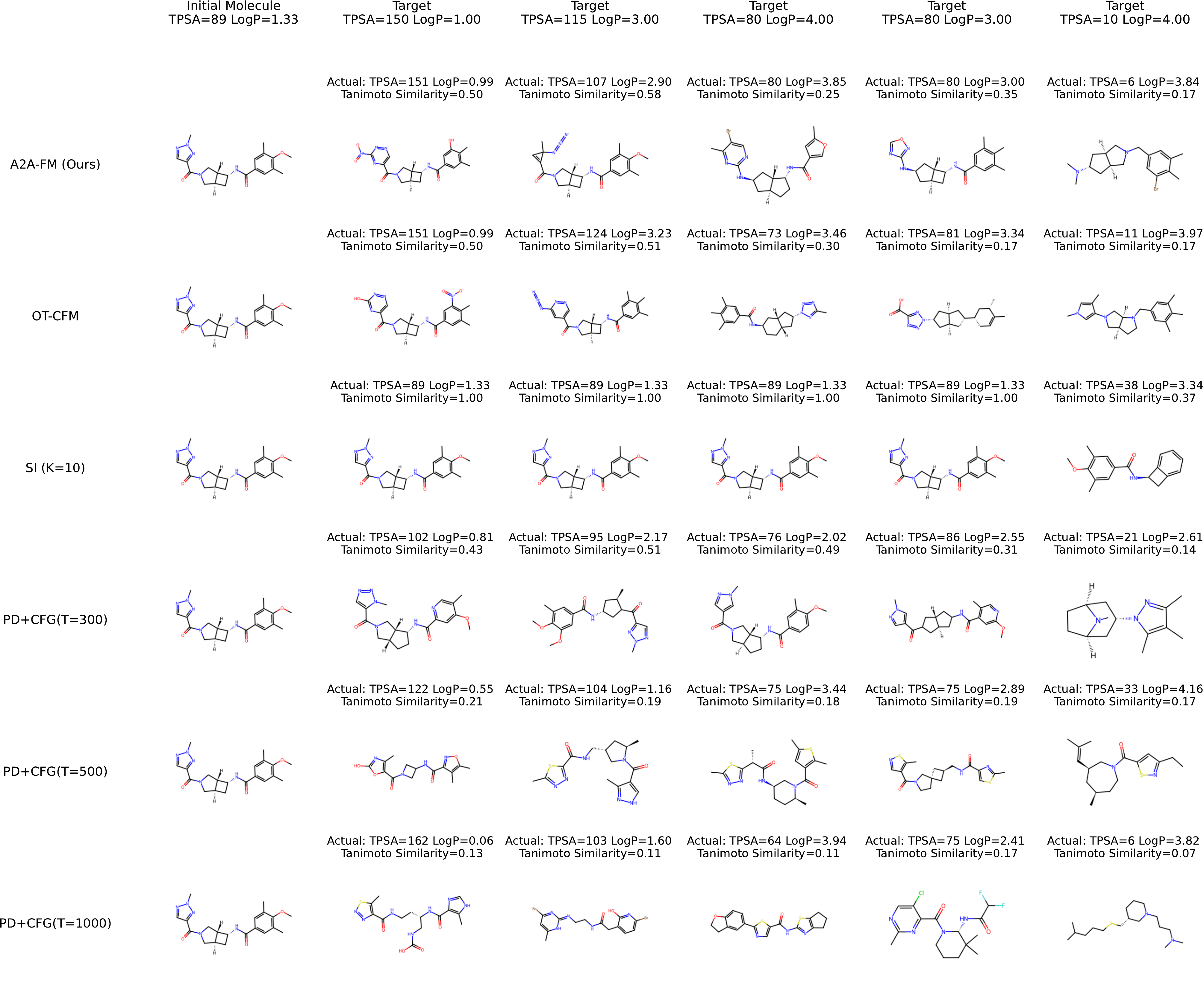}
    \caption{All-to-all transfer examples of experiment in Section~\ref{sec:real-world-experiments}.}
    \label{fig:MOL2}
\end{figure}
\begin{figure}[t]
    \centering
    \includegraphics[width=\linewidth]{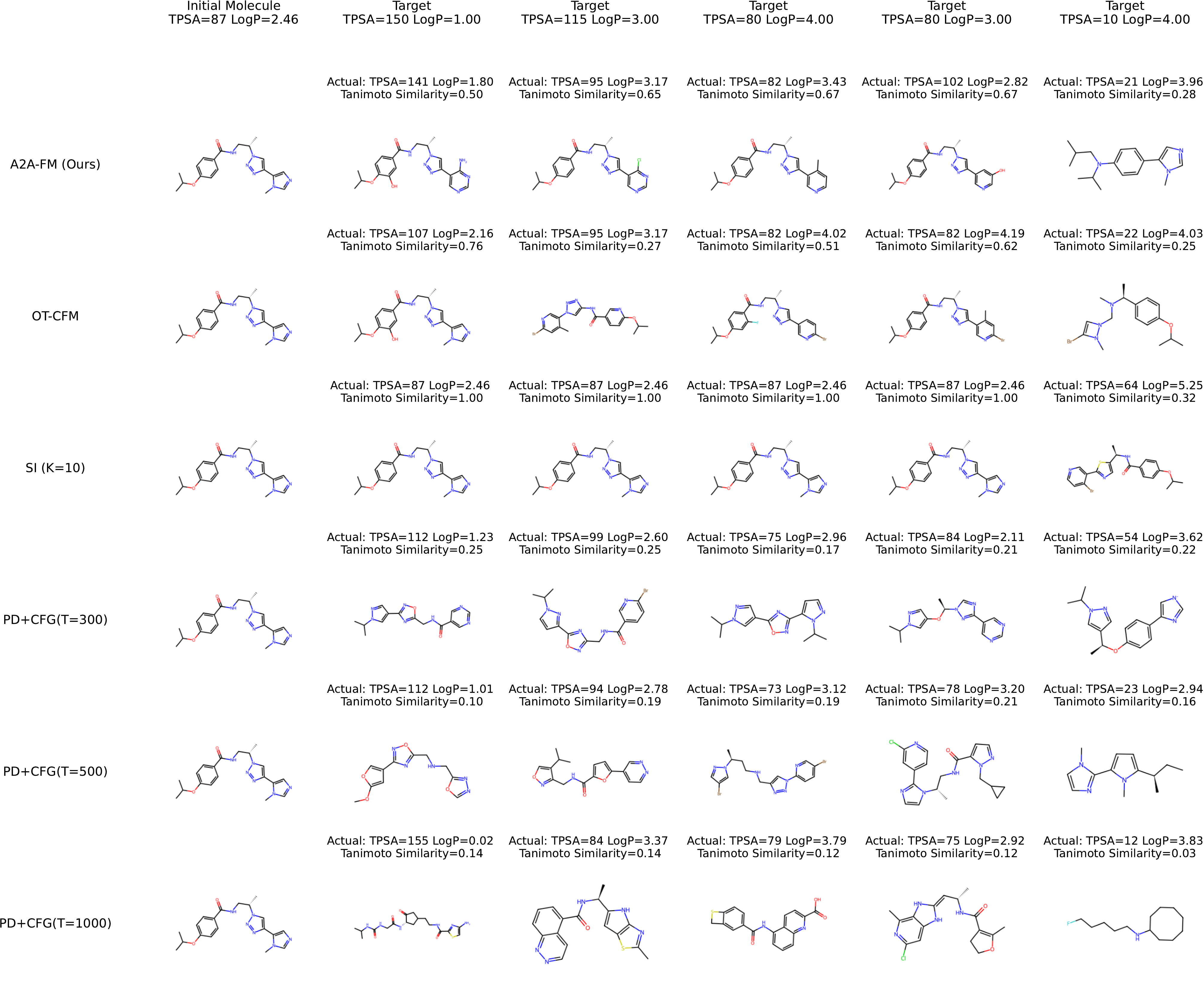}
    \caption{All-to-all transfer examples of experiment in Section~\ref{sec:real-world-experiments}.}
    \label{fig:MOL3}
\end{figure}
\begin{figure}[t]
    \centering
    \includegraphics[width=\linewidth]{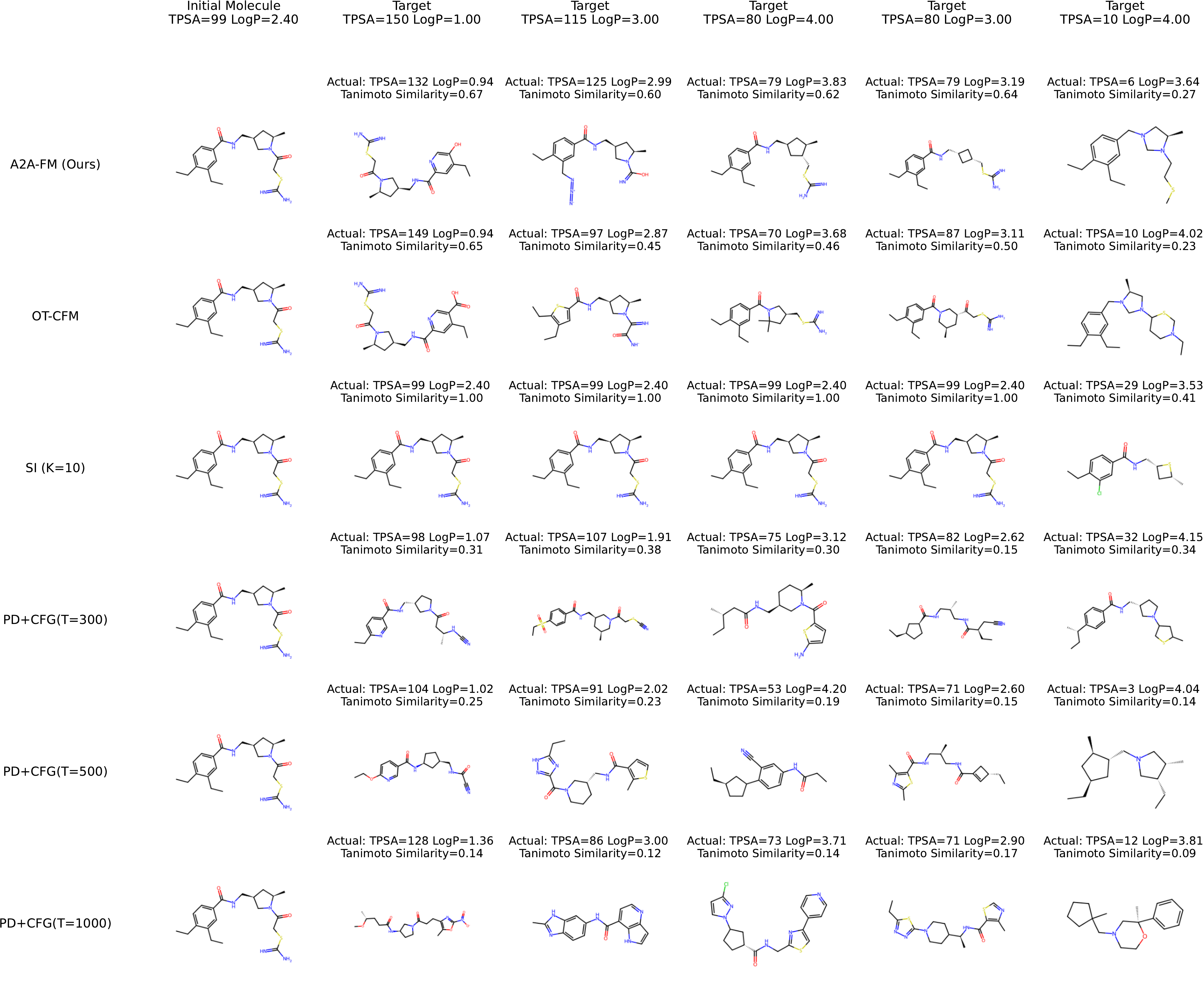}
    \caption{All-to-all transfer examples of experiment in Section~\ref{sec:real-world-experiments}.}
    \label{fig:MOL4}
\end{figure}
\begin{figure}[t]
    \centering
    \includegraphics[width=\linewidth]{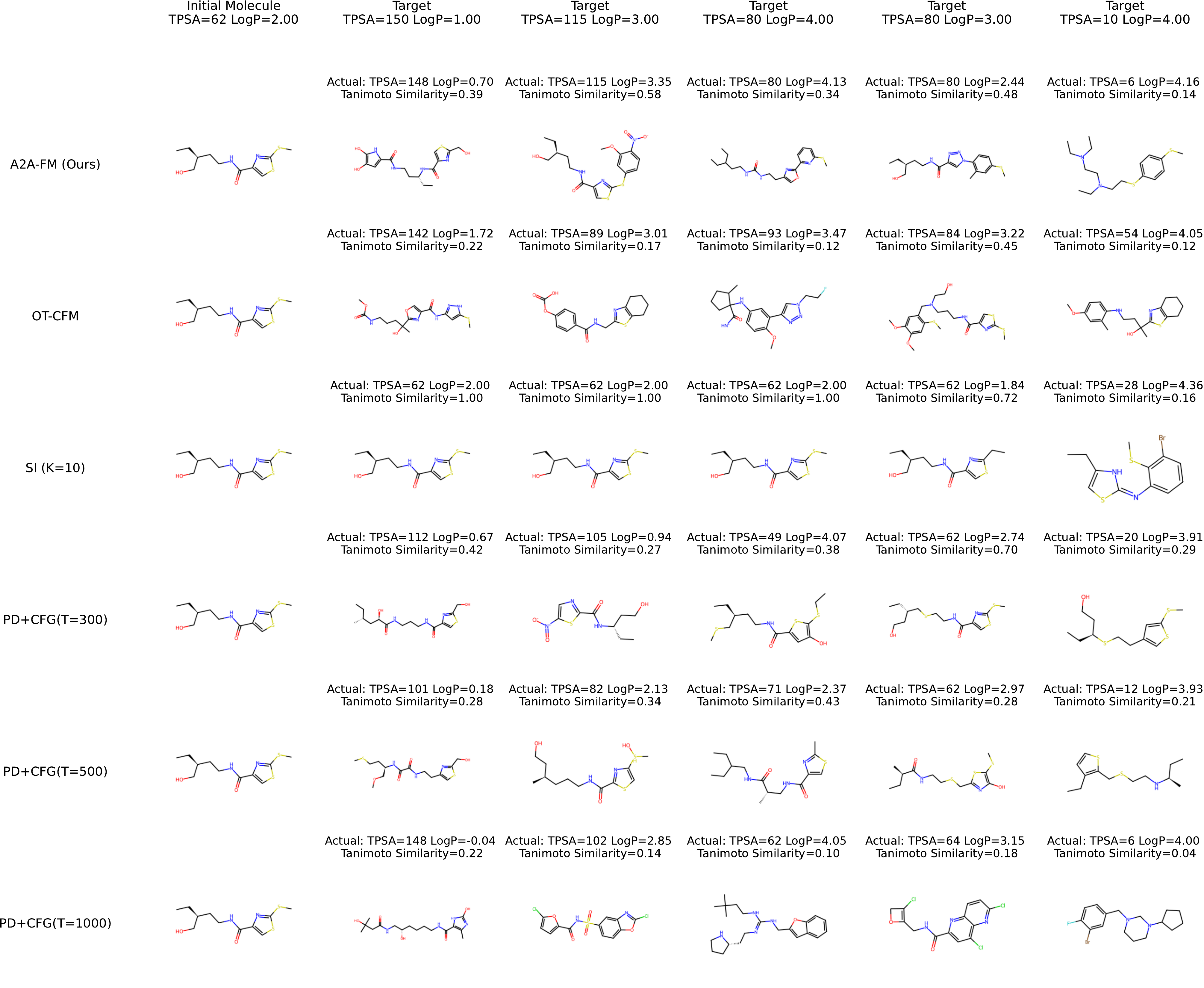}
    \caption{All-to-all transfer examples of experiment in Section~\ref{sec:real-world-experiments}}
    \label{fig:MOL5}
\end{figure}

\begin{figure}
    \centering
    \includegraphics[width=\linewidth]{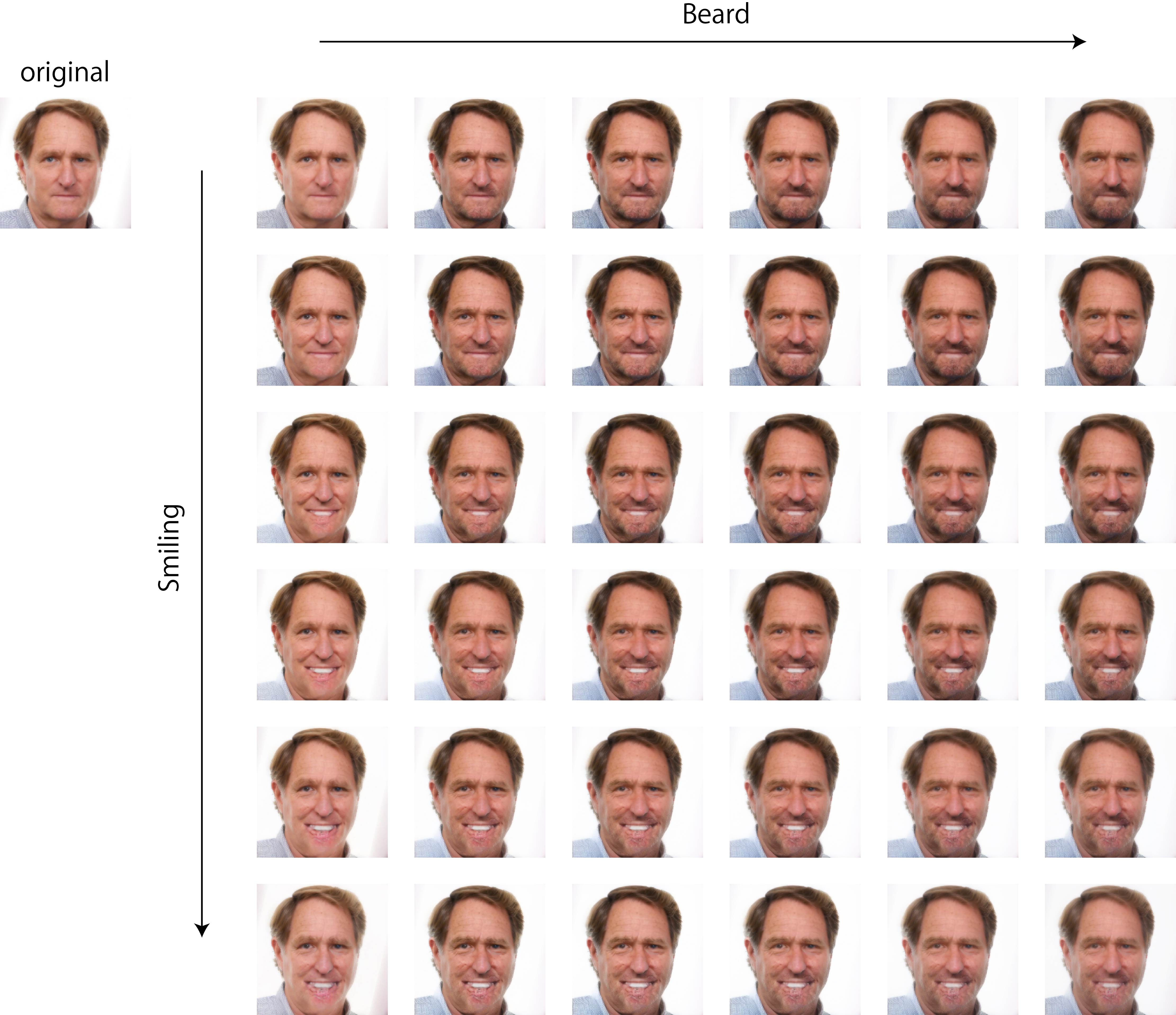}
    
    \caption{Transfer examples of CelebA-Dialog HQ dataset. \methodabb{} is scalable to large scale all-to-all transfer tasks.}
    \label{fig:celebahq1}
\end{figure}
\begin{figure}
    \centering
    \includegraphics[width=\linewidth]{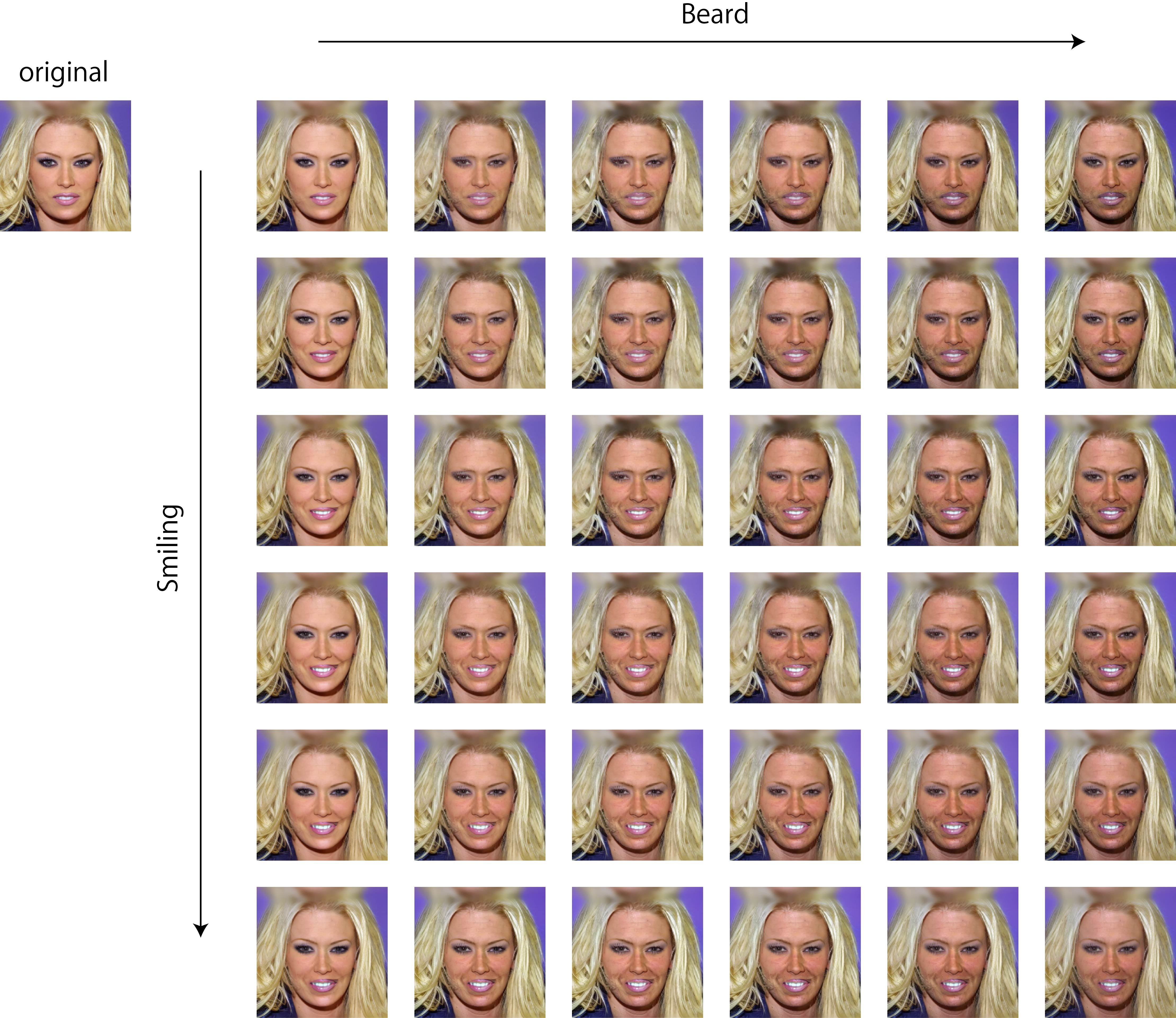}
    \caption{Transfer examples of CelebA-Dialog HQ dataset. \methodabb{} is scalable to large scale all-to-all transfer tasks.}
    \label{fig:celebahq2}
\end{figure}
\begin{figure}
    \centering
    \includegraphics[width=\linewidth]{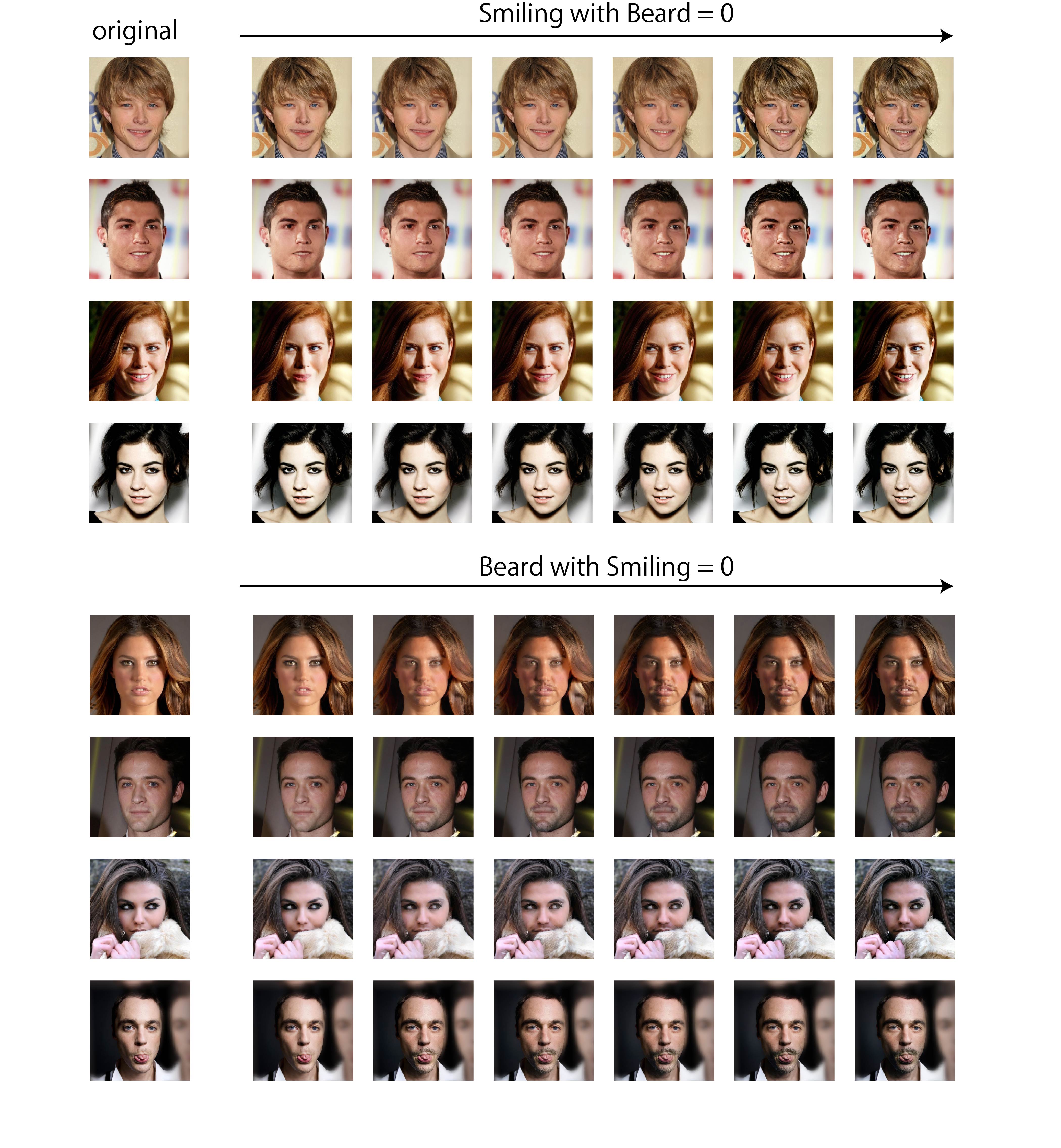}
    \caption{Additional transfer examples of CelebA-Dialog HQ dataset. Here, we fixed the Beard to 0 at changing Smiling attribute and Smiling to 0 at changing the Beard attribute. }
    \label{fig:celebahq3}
\end{figure}

\end{document}